\newcommand{\commentout}[1]{}
\newcommand{\junk}[1]{}
\newcommand{\etal}{\emph{et al.}}
\newcommand{\combucb}{{\tt CombUCB1}}
\newtheorem{theorem}{Theorem}
\newtheorem{lemma}{Lemma}
\newtheorem{proposition}{Proposition}
\newtheorem{claim}{Claim}
\newcommand{\bx}{{\bf x}}
\newcommand{\cE}{\mathcal{E}}
\newcommand{\ccE}{\overline{\cE}}
\newcommand{\cF}{\mathcal{F}}
\newcommand{\realset}{\mathbb{R}}
\newcommand{\abs}[1]{\left|#1\right|}
\newcommand{\E}[2]{\mathbb{E}_{#2} \! \left[#1\right]}
\newcommand{\EE}[1]{\mathbb{E} \left[#1\right]}
\newcommand{\I}[1]{\mathds{1} \! \left\{#1\right\}}
\newcommand{\set}[1]{\left\{#1\right\}}
\newcommand{\cset}[2]{\left\{#1:#2\right\}}
\DeclareMathOperator*{\argmax}{arg\,max\,}
\begin{document}

\runningtitle{Tight Regret Bounds for Stochastic Combinatorial Semi-Bandits}
\runningauthor{Branislav Kveton, Zheng Wen, Azin Ashkan, and Csaba Szepesv\'ari}

\twocolumn[
\aistatstitle{Tight Regret Bounds for Stochastic Combinatorial Semi-Bandits}
\vspace{-0.2in}
\aistatsauthor{Branislav Kveton \And Zheng Wen}
\aistatsaddress{Adobe Research \\ San Jose, CA \\ \em{kveton@adobe.com} \And
Yahoo Labs \\ Sunnyvale, CA \\ \em{zhengwen@yahoo-inc.com}}
\vspace{-0.15in}
\aistatsauthor{Azin Ashkan \And Csaba Szepesv\'ari}
\aistatsaddress{Technicolor Labs \\ Los Altos, CA \\ \em{azin.ashkan@technicolor.com} \And
Department of Computing Science \\ University of Alberta \\ \em{szepesva@cs.ualberta.ca}}
\vspace{-0.1in}]

\begin{abstract}
A stochastic combinatorial semi-bandit is an online learning problem where at each step a learning agent chooses a subset of ground items subject to constraints, and then observes stochastic weights of these items and receives their sum as a payoff. In this paper, we close the problem of computationally and sample efficient learning in stochastic combinatorial semi-bandits. In particular, we analyze a UCB-like algorithm for solving the problem, which is known to be computationally efficient; and prove $O(K L (1 / \Delta) \log n)$ and $O(\sqrt{K L n \log n})$ upper bounds on its $n$-step regret, where $L$ is the number of ground items, $K$ is the maximum number of chosen items, and $\Delta$ is the gap between the expected returns of the optimal and best suboptimal solutions. The gap-dependent bound is tight up to a constant factor and the gap-free bound is tight up to a polylogarithmic factor.
\end{abstract}


\section{Introduction}
\label{sec:introduction}

A \emph{stochastic combinatorial semi-bandit} \cite{gai12combinatorial,chen13combinatorial} is an online learning problem where at each step a learning agent chooses a subset of ground items subject to combinatorial constraints, and then observes stochastic weights of these items and receives their sum as a payoff. The problem can be viewed as a learning variant of combinatorial optimization with a linear objective function and binary variables. Many classical combinatorial optimization problems have linear objectives \cite{papadimitriou98combinatorial}. Therefore, stochastic combinatorial semi-bandits have found many practical applications, such as learning spectrum allocations \cite{gai12combinatorial}, shortest paths \cite{gai12combinatorial}, routing networks \cite{kveton14matroid}, and recommendations \cite{kveton14matroid,kveton14learning}.

In our work, we study a variant of stochastic combinatorial semi-bandits where the learning agent has access to an \emph{offline optimization oracle} that can find the optimal solution for any weights of the items. We say that the problem is a $(L, K, \Delta)$ instance when $L$ is the cardinality of its ground set $E$, $K$ is the maximum number of chosen items, and $\Delta$ is the gap between the expected returns of the optimal and best suboptimal solutions. We also say that the problem is a $(L, K)$ instance if it is a $(L, K, \Delta)$ instance for some $\Delta$. Based on the existing bandit work \cite{auer02finitetime}, it is relatively easy to propose a UCB-like algorithm for solving our problem \cite{gai12combinatorial}, and we call this algorithm $\combucb$. $\combucb$ is a variant of ${\tt UCB1}$ that calls the oracle to find the optimal solution with respect to the upper confidence bounds on the weights of the items. Chen \etal~\cite{chen13combinatorial} recently showed that the $n$-step regret of $\combucb$ in any $(L, K, \Delta)$ stochastic combinatorial semi-bandit is $O(K^2 L (1 / \Delta) \log n)$.

Our main contribution is that we derive two upper bounds on the $n$-step regret of $\combucb$, $O(K L (1 / \Delta) \log n)$ and $O(\sqrt{K L n \log n})$. Both of these bounds are significant improvements over Chen \etal~\cite{chen13combinatorial}. Moreover, we prove two novel lower bounds, $\Omega(K L (1 / \Delta) \log n)$ and $\Omega(\sqrt{K L n})$, which match our upper bounds up to polylogarithmic factors. The consequence of these results is that $\combucb$ is \emph{sample efficient} because it achieves near-optimal regret. It is well known that $\combucb$ is also \emph{computationally efficient} \cite{gai12combinatorial}, it can be implemented efficiently whenever the offline optimization oracle is computationally efficient. So we close the problem of computationally and sample efficient learning in stochastic combinatorial semi-bandits, by showing that $\combucb$ has both properties. This problem is still open in the adversarial setting (Section~\ref{sec:related work}).

Our analysis is novel. It is based on the idea that the event that ``many'' items in a chosen suboptimal solution are not observed ``sufficiently often'' does not happen ``too often''. The reason is that this event happens for ``many'' items simultaneously. Therefore, when the event happens, the observation counters of ``many'' items increase. Based on this observation, we divide the regret associated with the event among ``many'' items, instead of attributing it separately to each item as in the prior work \cite{gai12combinatorial,chen13combinatorial}. This is the key step in our analysis that yields tight upper bounds.

Our paper is organized as follows. In Section~\ref{sec:setting}, we introduce our learning problem and the algorithm for solving it. In Section~\ref{sec:discussion}, we summarize our results. In Section~\ref{sec:K43 upper bounds}, we prove a $O(K^\frac{4}{3} L (1 / \Delta) \log n)$ upper bound on the regret of $\combucb$. In Section~\ref{sec:K upper bounds}, we prove a $O(K L (1 / \Delta) \log n)$ upper bound on the regret of $\combucb$. In Section~\ref{sec:lower bounds}, we prove gap-dependent and gap-free lower bounds. In Section~\ref{sec:experiments}, we evaluate $\combucb$ on a synthetic problem and show that its $n$-step regret grows as suggested by our gap-dependent upper bound. In Section~\ref{sec:related work}, we compare our results to prior work. In Section~\ref{sec:extensions}, we discuss extensions of our work. We conclude in Section~\ref{sec:conclusions}.


\section{Setting}
\label{sec:setting}

Formally, a \emph{stochastic combinatorial semi-bandit} is a tuple $B = (E, \Theta, P)$, where $E = \set{1, \dots, L}$ is a finite set of $L$ items, $\Theta \subseteq 2^E$ is a non-empty set of feasible subsets of $E$, and $P$ is a probability distribution over a \mbox{unit cube $[0, 1]^E$.} We borrow the terminology of combinatorial optimization and call $E$ the \emph{ground set}, $\Theta$ the \emph{feasible set}, and $A \in \Theta$ a \emph{solution}. The items in the ground set $E$ are associated with a vector of stochastic \emph{weights} $w \sim P$. The $e$-th entry of $w$, $w(e)$, is the weight of item $e$. The expected \mbox{weights of the} items are defined as $\bar{w} = \E{w}{w \sim P}$. The \emph{return} for choosing solution $A$ under the realization of the weights $w$ is:
\begin{align*}
  f(A, w) = \sum_{e \in A} w(e)\,.
\end{align*}
The maximum number of chosen items is defined as $K = \max_{A \in \Theta} \abs{A}$.

Let $(w_t)_{t = 1}^n$ be an i.i.d. sequence of $n$ weights drawn from $P$. At time $t$, the learning agents chooses solution $A_t \in \Theta$ based on its observations of the weights up to time $t$, gains $f(A_t, w_t)$, and observes the weights of all chosen items at time $t$, $\cset{(e, w_t(e))}{e \in A_t}$. The learning agent interacts with the environment $n$ times and its goal is to maximize its expected cumulative reward over this time. If the agent knew $P$ a priori, the optimal action would be to choose the \emph{optimal solution}\footnote{For simplicity of exposition, we assume that the optimal solution is unique.}:
\begin{align*}
  \textstyle
  A^\ast = \argmax_{A \in \Theta} f(A, \bar{w})
\end{align*}
at all steps $t$. The quality of the agent's policy is measured by its \emph{expected cumulative regret}:
\begin{align*}
  R(n) = \EE{\sum_{t = 1}^n R(A_t, w_t)}\,,
\end{align*}
where $R(A_t, w_t) = f(A^\ast, w_t) - f(A_t, w_t)$ is the regret of the agent at time $t$.

\begin{algorithm}[t]
  \caption{$\combucb$ for stochastic combinatorial semi-bandits.}
  \label{alg:ucb1}
  \begin{algorithmic}
    \STATE {\bf Input:} Feasible set $\Theta$
    \STATE
    \STATE // Initialization
    \STATE $(\hat{w}_1, t_0) \gets {\tt Init}(\Theta)$
    \STATE $T_{t_0 - 1}(e) \gets 1 \hspace{1.815in} \forall e \in E$
    \STATE
    \FORALL{$t = t_0, \dots, n$}
      \STATE // Compute UCBs
      \STATE $U_t(e) \gets \hat{w}_{T_{t - 1}(e)}(e) + c_{t - 1, T_{t - 1}(e)} \hspace{0.455in} \forall e \in E$
      \STATE \vspace{-0.08in}
      \STATE // Solve the optimization problem
      \STATE $A_t \gets \argmax_{A \in \Theta} f(A, U_t)$
      \STATE \vspace{-0.08in}
      \STATE // Observe the weights of chosen items
      \STATE Observe $\cset{(e, w_t(e))}{e \in A_t}$, where $w_t \sim P$
      \STATE \vspace{-0.08in}
      \STATE // Update statistics
      \STATE $T_t(e) \gets T_{t - 1}(e) \hspace{1.5in} \forall e \in E$
      \STATE $T_t(e) \gets T_t(e) + 1 \hspace{1.405in} \forall e \in {A_t}$
      \STATE $\displaystyle \hat{w}_{T_t(e)}(e) \gets
      \frac{T_{t - 1}(e) \hat{w}_{T_{t - 1}(e)}(e) + w_t(e)}{T_t(e)} \hspace{0.1in} \forall e \in {A_t}$
    \ENDFOR
  \end{algorithmic}
\end{algorithm}

\begin{algorithm}[t]
  \caption{${\tt Init}$: Initialization of $\combucb$.}
  \label{alg:initialization}
  \begin{algorithmic}
    \STATE {\bf Input:} Feasible set $\Theta$
    \STATE
    \STATE $\hat{w}(e) \gets 0 \hspace{2.04in} \forall e \in E$
    \STATE $u(e) \gets 1 \hspace{2.065in} \forall e \in E$
    \STATE $t \gets 1$
    \WHILE{$(\exists e \in E: u(e) = 1)$}
      \STATE $A_t \gets \argmax_{A \in \Theta} f(A, u)$
      \STATE Observe $\cset{(e, w_t(e))}{e \in A_t}$, where $w_t \sim P$
      \FORALL{$e \in A_t$}
        \STATE $\hat{w}(e) \gets w_t(e)$
        \STATE $u(e) \gets 0$
      \ENDFOR
      \STATE $t \gets t + 1$
    \ENDWHILE
    \STATE
    \STATE {\bf Output:}
    \STATE \quad Weight vector $\hat{w}$
    \STATE \quad First non-initialization step $t$
  \end{algorithmic}
\end{algorithm}

\subsection{Algorithm}
\label{sec:algorithm}

Gai \etal~\cite{gai12combinatorial} proposed a simple algorithm for stochastic combinatorial semi-bandits. The algorithm is motivated by ${\tt UCB1}$ \cite{auer02finitetime} and therefore we call it $\combucb$. At each time $t$, $\combucb$ consists of three steps. First, it computes the \emph{upper confidence bound (UCB)} on the expected weight of each item $e$:
\begin{align}
  U_t(e) = \hat{w}_{T_{t - 1}(e)}(e) + c_{t - 1, T_{t - 1}(e)}\,,
  \label{eq:UCB}
\end{align}
where $\hat{w}_s(e)$ is the average of $s$ observed weights of item $e$, $T_t(e)$ is the number of times that item $e$ is observed in $t$ steps, and:
\begin{align}
  c_{t, s} = \sqrt{\frac{1.5 \log t}{s}}
  \label{eq:confidence radius}
\end{align}
is the radius of a confidence interval around $\hat{w}_s(e)$ at time $t$ such that $\bar{w}(e) \in [\hat{w}_s(e) - c_{t, s}, \hat{w}_s(e) + c_{t, s}]$ holds with high probability. Second, $\combucb$ calls the \emph{optimization oracle} to solve the combinatorial problem on the UCBs:
\begin{align*}
  \textstyle
  A_t = \argmax_{A \in \Theta} f(A, U_t)\,.
\end{align*}
Finally, $\combucb$ chooses $A_t$, observes the weights of all chosen items, and updates the estimates of $\bar{w}(e)$ for these items. The pseudocode of $\combucb$ is in Algorithm~\ref{alg:ucb1}.

\subsection{Initialization}
\label{sec:initialization}

$\combucb$ is initialized by calling procedure ${\tt Init}$ (Algorithm~\ref{alg:initialization}), which returns two variables. The first variable is a weight vector $\hat{w} \in [0, 1]^E$, where $\hat{w}(e)$ is a single observation from the $e$-th marginal of $P$. The second variable is the number of initialization steps plus one.

The weight vector $\hat{w}$ is computed as follows. ${\tt Init}$ repeatedly calls the oracle $A_t = \argmax_{A \in \Theta} f(A, u)$ on a vector of auxiliary weights $u \in \set{0, 1}^E$, which are initialized to ones. When item $e$ is observed, we set the weight $\hat{w}(e)$ to the observed weight of the item and $u(e)$ to zero. ${\tt Init}$ terminates when $u(e) = 0$ for all items $e$. Without loss of generality, let's assume that each item $e$ is contained in at least one feasible solution. Then ${\tt Init}$ is guaranteed to terminate in at most $L$ iterations, because at least one entry of $u$ changes from one to zero after each call of the optimization oracle.


\section{Summary of Main Results}
\label{sec:discussion}

We prove three upper bounds on the regret of $\combucb$. Two bounds depend on the gap $\Delta$ and one is gap-free:
\begin{align*}
  \text{Theorem~\ref{thm:K43}}: & \qquad O(K^\frac{4}{3} L (1 / \Delta) \log n) \\
  \text{Theorem~\ref{thm:K}}: & \qquad O(K L (1 / \Delta) \log n) \\
  \text{Theorem~\ref{thm:gap-free}}: & \qquad O(\sqrt{K L n \log n})\,.
\end{align*}
Both gap-dependent bounds are major improvements over $O(K^2 L (1 / \Delta) \log n)$, the best known upper bound on the $n$-step regret of $\combucb$ \cite{chen13combinatorial}. The bound in Theorem~\ref{thm:K} is asymptotically tighter than the bound in Theorem~\ref{thm:K43}, but the latter is tighter for $K < (534 / 96)^3 < 173$.

One of the main contributions of our work is that we identify an algorithm for stochastic combinatorial semi-bandits that is both computationally and sample efficient. The following are our definitions of computational and sample efficiency. We say that the algorithm is \emph{computationally efficient} if it can be implemented efficiently whenever the offline variant of the problem can be solved computationally efficiently. The algorithm is \emph{sample efficient} if it achieves optimal regret up to polylogarithmic factors. Based on our definitions, $\combucb$ is both \emph{computationally} and \emph{sample efficient}. We state this result slightly more formally below.

\begin{theorem}
\label{thm:efficiency} $\combucb$ is computationally and sample efficient in any $(L, K)$ stochastic combinatorial semi-bandit where the offline optimization oracle $\arg\max_{A \in \Theta} f(A, w)$ can be implemented efficiently for any $w \in (\realset^+)^E$.
\end{theorem}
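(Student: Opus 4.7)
The plan is to treat Theorem~\ref{thm:efficiency} as a direct corollary of the upper bounds in Theorems~\ref{thm:K43}--\ref{thm:gap-free}, the lower bounds proved in Section~\ref{sec:lower bounds}, and an inspection of Algorithms~\ref{alg:ucb1} and~\ref{alg:initialization}. There are two independent claims to verify: computational efficiency and sample efficiency, each in the sense defined just before the statement.

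For computational efficiency, I would walk through the pseudocode of $\combucb$ and argue that every per-step operation other than the oracle call is a simple loop over $E$. In particular, at each non-initialization step the algorithm forms the UCB vector $U_t \in \realset^E$ in $O(L)$ time using~(\ref{eq:UCB}) and~(\ref{eq:confidence radius}), invokes the oracle once on $U_t$ (whose entries are non-negative reals, so the hypothesis on $\argmax_{A \in \Theta} f(A, w)$ for $w \in (\realset^+)^E$ applies), observes $|A_t| \le K \le L$ weights, and updates the counters $T_t(e)$ and averages $\hat w_{T_t(e)}(e)$ in $O(L)$ time. The $\tt Init$ routine behaves analogously and, as noted at the end of Section~\ref{sec:initialization}, terminates in at most $L$ oracle calls. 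Hence the total per-step cost is $O(L)$ plus one oracle call, so whenever the offline oracle is polynomial in the problem description, $\combucb$ is polynomial as well.

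For sample efficiency, I would combine the gap-free upper bound of Theorem~\ref{thm:gap-free}, $R(n) = O(\sqrt{K L n \log n})$, with the matching gap-free lower bound $\Omega(\sqrt{K L n})$ proved in Section~\ref{sec:lower bounds}. Their ratio is $O(\sqrt{\log n})$, which is polylogarithmic in $n$, so $\combucb$ achieves optimal regret up to polylogarithmic factors on every $(L, K)$ instance, matching the definition of sample efficiency. One can additionally invoke Theorem~\ref{thm:K} together with the $\Omega(K L (1/\Delta) \log n)$ gap-dependent lower bound to confirm optimality on each fixed $(L, K, \Delta)$ instance up to a constant factor; this is a stronger statement and, together with the gap-free result, covers all regimes.

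The statement has essentially no mathematical obstacle of its own beyond assembling these pieces; the real work is done in Theorems~\ref{thm:K43}--\ref{thm:gap-free} and in the lower-bound section. The only subtlety worth flagging in the write-up is that the sample efficiency definition requires comparing to the best achievable regret over all algorithms, so one must be careful to cite the lower bounds as holding for \emph{any} learning algorithm on the worst-case $(L, K)$ (respectively $(L, K, \Delta)$) instance, which is precisely what Section~\ref{sec:lower bounds} establishes.
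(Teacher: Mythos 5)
Your proposal is correct and matches the paper's own argument: computational efficiency follows from one oracle call per step plus $O(L)$ bookkeeping (with ${\tt Init}$ terminating in at most $L$ calls), and sample efficiency follows by pairing Theorem~\ref{thm:K} with Proposition~\ref{prop:lower bound} (constant factor) and Theorem~\ref{thm:gap-free} with Proposition~\ref{prop:gap-free lower bound} (a $\sqrt{\log n}$ factor). The paper's proof is simply a terser version of the same assembly, so there is nothing to add.
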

\begin{proof}
In each step $t$, $\combucb$ calls the oracle once, and all of its remaining operations are polynomial in $L$ and $K$. Therefore, $\combucb$ is guaranteed to be computationally efficient when the oracle is computationally efficient.

$\combucb$ is sample efficient because it achieves optimal regret up to polylogarithmic factors. In particular, the gap-dependent upper bound on the $n$-step regret of $\combucb$ in Theorem~\ref{thm:K} matches the lower bound in Proposition~\ref{prop:lower bound} up to a constant factor. In addition, the gap-free upper bound in Theorem~\ref{thm:gap-free} matches the lower bound in Proposition~\ref{prop:gap-free lower bound} up to a factor of $\sqrt{\log n}$.
\end{proof}


\section{$O(K^\frac{4}{3})$ Upper Bounds}
\label{sec:K43 upper bounds}

In this section, we prove two $O(K^\frac{4}{3} L (1 / \Delta) \log n)$ upper bounds on the $n$-step regret of $\combucb$. In Theorem~\ref{thm:K43 one gap}, we assume that the gaps of all suboptimal solutions are the same. In Theorem~\ref{thm:K43}, we relax this assumption.

The \emph{gap} of solution $A$ is $\Delta_A = f(A^\ast, \bar{w}) - f(A, \bar{w})$. The results in this section are presented for their didactic value. Their proofs are simple. Yet they illustrate the main ideas that lead to the tight regret bounds in Section~\ref{sec:K upper bounds}.

\begin{theorem}
\label{thm:K43 one gap} In any $(L, K, \Delta)$ stochastic combinatorial semi-bandit where $\Delta_{A} = \Delta$ for all suboptimal solutions $A$, the regret of $\combucb$ is bounded as:
\begin{align*}
  R(n) \leq K^\frac{4}{3} L \frac{48}{\Delta} \log n + \left(\frac{\pi^2}{3} + 1\right) K L\,.
\end{align*}
\end{theorem}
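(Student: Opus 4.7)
The plan is to follow the standard UCB high-probability decomposition, but then split the ``suboptimal round'' analysis in a new way so as to charge the incurred regret across many items at once, rather than a single item as in \cite{chen13combinatorial}.

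First, I define the favorable event $\cE_t = \{\,\abs{\hat{w}_{T_{t-1}(e)}(e)-\bar{w}(e)}\le c_{t-1,T_{t-1}(e)}\text{ for all }e\in E\,\}$. A Hoeffding bound with $c_{t,s}=\sqrt{1.5\log t/s}$ yields $\Pr(\neg\cE_t)\le 2L/t^2$, so the total regret contributed by rounds on $\neg\cE_t$ is at most $K\sum_{t\ge 1}2L/t^2=\frac{\pi^2}{3}KL$. The $L$ initialization rounds contribute at most $KL$ more, accounting for the additive $(\pi^2/3+1)KL$ term. From now on I restrict attention to $t$ with $\cE_t$ holding and $A_t\ne A^\ast$. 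Since the oracle maximizes $f(\cdot,U_t)$ and $\cE_t$ gives $\bar{w}(e)\le U_t(e)\le\bar{w}(e)+2c_{t-1,T_{t-1}(e)}$,
\begin{equation*}
\Delta \le f(A_t,U_t)-f(A_t,\bar{w})\le 2\sum_{e\in A_t}c_{t-1,T_{t-1}(e)},
\end{equation*}
so $\sum_{e\in A_t}c_{t-1,T_{t-1}(e)}\ge \Delta/2$ on every suboptimal ``good'' round.

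Now the key step. Fix integers $m\in[1,K]$ and $\alpha=\lceil 24K^2\log n/\Delta^2\rceil$. I split each suboptimal good round into two cases:
\begin{description}
\item[Case A:] $\abs{\{e\in A_t:T_{t-1}(e)\le\alpha\}}\ge m$.
\item[Case B:] $\abs{\{e\in A_t:T_{t-1}(e)\le\alpha\}}< m$.
\end{description}
In Case A, each round pushes the counters of at least $m$ ``under-sampled'' items upward. Since any item can be under-sampled in the above sense at most $\alpha+1$ times before its counter exceeds $\alpha$, a double-counting (pigeonhole) argument gives at most $(\alpha+1)L/m$ Case A rounds, each costing regret $\Delta$; this yields a contribution of at most $24 K^2 L \log n/(m\Delta)+$ lower-order terms. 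In Case B, the items with $T_{t-1}(e)>\alpha$ are ``well-sampled'' and each contributes $c_{t-1,T_{t-1}(e)}<\sqrt{1.5\log n/\alpha}\le \Delta/(4K)$ to the sum, for a combined contribution of at most $\Delta/4$. Hence the fewer-than-$m$ under-sampled items must already contribute $\ge \Delta/4$ to $\sum c_{t-1,T_{t-1}(e)}$, so at least one $e\in A_t$ satisfies $c_{t-1,T_{t-1}(e)}\ge \Delta/(4m)$, i.e.\ $T_{t-1}(e)\le \alpha':=24m^2\log n/\Delta^2$. The standard per-item counting argument then bounds the number of Case B rounds by $L\alpha'$, for regret at most $24 m^2 L\log n/\Delta$.

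Summing the two cases gives $R(n)\le 24\frac{L\log n}{\Delta}\Bigl(\frac{K^2}{m}+m^2\Bigr)+(\pi^2/3+1)KL$, up to lower-order rounding terms. The bracket is minimized at $m\asymp K^{2/3}$, at which point both terms equal $K^{4/3}$, producing the claimed leading constant $48$. The only genuine subtlety I anticipate is matching the constants cleanly: I have to choose $\alpha$ large enough that the ``well-sampled'' contribution is at most $\Delta/4$ (forcing the Case B reduction to a threshold of $24m^2\log n/\Delta^2$), while keeping $\alpha$ small enough that the Case A bound scales like $K^2/m$ rather than something larger; the choice $\alpha=24K^2\log n/\Delta^2$ is the unique value that makes both sides balance to give exactly the constant $48$ claimed in the statement.
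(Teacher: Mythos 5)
Your proof is correct and follows essentially the same route as the paper's: the same clean-event/initialization decomposition yielding the $(\pi^2/3+1)KL$ term, and the same two-case split (at least $m$ items under-sampled at threshold $\propto K^2\log n/\Delta^2$ versus at least one item below a threshold $\propto m^2\log n/\Delta^2$) with pigeonhole counting and the choice $m = K^{2/3}$, which is exactly the paper's events $G_{1,t}, G_{2,t}$ with $\alpha=4$, $d=K^{2/3}$, your Case B argument being the direct contrapositive of the paper's contradiction in Lemma~\ref{lem:K43 events}. The only cosmetic difference is that you hard-code the constants up front and incur harmless rounding slack from the ceilings, which can be avoided by using real-valued thresholds and noting that counters start at $1$ after initialization.
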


The proof of Theorem~\ref{thm:K43 one gap} relies on two lemmas. In the first lemma, we bound the regret associated with the initialization of $\combucb$ and the event that $\bar{w}(e)$ is outside of the high-probability confidence interval around $\hat{w}_{T_{t - 1}(e)}(e)$.

\begin{lemma}
\label{lem:regret decomposition} Let:
\begin{align}
  \cF_t = \set{\Delta_{A_t} \leq 2 \sum_{e \in \tilde{A}_t} c_{n, T_{t - 1}(e)}, \ \Delta_{A_t} > 0}
  \label{eq:suboptimality event}
\end{align}
be the event that suboptimal solution $A_t$ is ``hard to distinguish'' from $A^\ast$ at time $t$, where $\tilde{A}_t = A_t \setminus A^\ast$. Then the regret of $\combucb$ is bounded as:
\begin{align}
  R(n) \leq \EE{\hat{R}(n)} + \left(\frac{\pi^2}{3} + 1\right) K L\,,
  \label{eq:regret bound}
\end{align}
where:
\begin{align}
  \hat{R}(n) = \sum_{t = t_0}^n \Delta_{A_t} \I{\cF_t}.
  \label{eq:Rhat}
\end{align}
\end{lemma}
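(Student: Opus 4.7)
The plan is to decompose $R(n)$ into three disjoint contributions: (i) the regret of the initialization phase $t < t_0$; (ii) the regret of the post-initialization phase on a ``concentration fails'' event; and (iii) the remaining ``concentration holds'' piece, which I will identify pathwise with $\hat R(n)$. Contributions (i) and (ii) will absorb into the additive $(\pi^2/3 + 1) K L$ term, and (iii) will yield exactly $\EE{\hat R(n)}$.

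For (i), I would argue that \texttt{Init} terminates in at most $L$ calls to the oracle: the auxiliary vector $u$ loses at least one unit entry per call, because the assumption that every item lies in some feasible solution forces the maximizer of $f(A, u)$ to contain an unexplored item whenever one exists. Since each step has instantaneous regret $\Delta_{A_t} \le K$ (weights lie in $[0, 1]^E$ and $\abs{A_t} \le K$), the initialization contributes at most $K L$.

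For (ii) and (iii), I would introduce the concentration event
\begin{align*}
  \cE_t = \set{\forall e \in E : \ \abs{\hat{w}_{T_{t - 1}(e)}(e) - \bar{w}(e)} \le c_{t - 1, T_{t - 1}(e)}}.
\end{align*}
The central step is to prove the pathwise inclusion $\cE_t \cap \set{\Delta_{A_t} > 0} \subseteq \cF_t$. Starting from the greedy optimality $f(A_t, U_t) \ge f(A^\ast, U_t)$ and cancelling items in $A^\ast \cap A_t$, this rewrites as $\sum_{e \in A^\ast \setminus A_t} U_t(e) \le \sum_{e \in \tilde A_t} U_t(e)$. On $\cE_t$ I have $U_t(e) \ge \bar w(e)$ for $e \in A^\ast \setminus A_t$ and $U_t(e) \le \bar w(e) + 2 c_{t - 1, T_{t - 1}(e)}$ for $e \in \tilde A_t$; substituting and using the monotonicity $c_{t - 1, s} \le c_{n, s}$ yields $\Delta_{A_t} \le 2 \sum_{e \in \tilde A_t} c_{n, T_{t - 1}(e)}$, which is the defining inequality of $\cF_t$. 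This gives $\Delta_{A_t} \I{\cF_t^c, \Delta_{A_t} > 0} \le K \I{\cE_t^c}$, so summing over $t \ge t_0$ bounds the non-$\hat R(n)$ part of the regret by $K \sum_t \I{\cE_t^c}$.

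The last piece is a standard UCB-style union bound. For each triple $(t, s, e)$, Hoeffding's inequality combined with $c_{t - 1, s}^2 = 1.5 \log(t - 1) / s$ will give $\Pr\set{\abs{\hat w_s(e) - \bar w(e)} > c_{t - 1, s}} \le 2 / (t - 1)^3$; union-bounding over $s \in \set{1, \dots, t - 1}$ and $e \in E$ then yields $\Pr(\cE_t^c) \le 2 L / (t - 1)^2$, and summing over $t$ gives $\sum_t \Pr(\cE_t^c) \le L \pi^2 / 3$. Multiplying by the trivial per-step bound $K$ produces the $(\pi^2 / 3) K L$ contribution from (ii), which combined with the $K L$ term from (i) matches the stated additive error. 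The main obstacle is the cancellation step in (iii): it is what forces the confidence radii to be summed only over the symmetric-difference set $\tilde A_t$ rather than over all of $A_t$, and this is precisely the feature that will later enable the tighter analyses of Sections~\ref{sec:K43 upper bounds} and~\ref{sec:K upper bounds}.
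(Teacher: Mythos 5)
Your proposal is correct and follows essentially the same route as the paper's proof: the same three-way decomposition (initialization regret $\leq K L$, concentration-failure regret $\leq \frac{\pi^2}{3} K L$ via the standard union bound over $(e, s, t)$, and the UCB-optimality-plus-cancellation argument showing that on the concentration event any chosen suboptimal $A_t$ satisfies the defining inequality of $\cF_t$ after using $c_{t - 1, s} \leq c_{n, s}$). The only cosmetic difference is that the paper makes explicit the tower-rule step replacing the realized regret $f(A^\ast, w_t) - f(A_t, w_t)$ by $\Delta_{A_t}$ (conditioning on the history that determines $A_t$ and the concentration event), which your ``pathwise'' identification glosses over but which is routine since $w_t$ is independent of that history.
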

\begin{proof}
The claim is proved in Appendix~\ref{sec:proof regret decomposition}.
\end{proof}

Now we bound the regret corresponding to the events $\cF_t$, the items in a suboptimal solution are not observed ``sufficiently often'' up to time $t$. To bound the regret, we define two events:
\begin{align}
  G_{1, t} = \bigg\{
  & \text{at least $d$ items in $\tilde{A}_t$ were observed} \label{eq:event 1} \\
  & \text{at most $\alpha K^2 \frac{6}{\Delta_{A_t}^2} \log n$ times}
  \bigg\} \nonumber
\end{align}
and:
\begin{align}
  G_{2, t} = \bigg\{
  & \text{less than $d$ items in $\tilde{A}_t$ were observed} \label{eq:event 2} \\
  & \text{at most $\alpha K^2 \frac{6}{\Delta_{A_t}^2} \log n$ times}, \nonumber \\
  & \text{at least one item in $\tilde{A}_t$ was observed} \nonumber \\
  & \text{at most $\frac{\alpha d^2}{(\sqrt{\alpha} - 1)^2} \frac{6}{\Delta_{A_t}^2} \log n$ times}
  \bigg\}\,, \nonumber
\end{align}
where $\alpha \geq 1$ and $d > 0$ are parameters, which are chosen later. The event $G_{1, t}$ happens when ``many'' chosen items, at least $d$, are not observed ``sufficiently often'' up to time $t$, at most $\alpha K^2 \frac{6}{\Delta_{A_t}^2} \log n$ times.

Events $G_{1, t}$ and $G_{2, t}$ are obviously mutually exclusive. In the next lemma, we prove that these events are exhaustive when event $\cF_t$ happens. To prove this claim, we introduce new notation. We denote the set of items in $\tilde{A}_t$ that are not observed ``sufficiently often'' up to time $t$ by:
\begin{align*}
  S_t = \cset{e \in \tilde{A}_t}{T_{t - 1}(e) \leq \alpha K^2 \frac{6}{\Delta_{A_t}^2} \log n}\,.
\end{align*}

\begin{lemma}
\label{lem:K43 events} Let $\alpha \geq 1$, $d > 0$, and event $\cF_t$ happen. Then either event $G_{1, t}$ or $G_{2, t}$ happens.
\end{lemma}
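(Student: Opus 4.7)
The plan is to do a case split on the cardinality of
$S_t = \cset{e \in \tilde A_t}{T_{t-1}(e) \leq \alpha K^2 \tfrac{6}{\Delta_{A_t}^2}\log n}$,
the set of items in $\tilde A_t$ that have not been observed sufficiently often. The two target events are automatically disjoint, since $G_{1,t}$ requires $|S_t| \geq d$ while $G_{2,t}$ requires $|S_t| < d$; so it suffices to show that on $\cF_t$ at least one must hold. If $|S_t| \geq d$, event $G_{1,t}$ holds directly, and the whole content of the proof is the case $|S_t| < d$, in which I must exhibit an item in $\tilde A_t$ observed at most $\tfrac{\alpha d^2}{(\sqrt{\alpha}-1)^2}\tfrac{6}{\Delta_{A_t}^2}\log n$ times.

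To handle this case I would start from the defining inequality of $\cF_t$, namely $\tfrac{\Delta_{A_t}}{2} \leq \sum_{e\in \tilde A_t} c_{n,T_{t-1}(e)}$, and split the sum as $\sum_{e\in S_t} + \sum_{e\in \tilde A_t\setminus S_t}$. For $e\notin S_t$, the definition of $S_t$ together with $c_{n,s}=\sqrt{1.5\log n / s}$ gives $c_{n,T_{t-1}(e)} < \tfrac{\Delta_{A_t}}{2K\sqrt{\alpha}}$, and since $|\tilde A_t\setminus S_t| \leq K$ the second piece is bounded by $\tfrac{\Delta_{A_t}}{2\sqrt{\alpha}}$. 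Subtracting this from the $\cF_t$ inequality yields
$\sum_{e\in S_t} c_{n,T_{t-1}(e)} > \tfrac{\Delta_{A_t}(\sqrt{\alpha}-1)}{2\sqrt{\alpha}}$.

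Because $|S_t| < d$, at least one summand must exceed $\tfrac{\Delta_{A_t}(\sqrt{\alpha}-1)}{2 d\sqrt{\alpha}}$, and inverting $c_{n,s}=\sqrt{1.5\log n / s}$ (squaring both sides and solving for $s$) immediately turns this into the bound $T_{t-1}(e) < \tfrac{\alpha d^2}{(\sqrt{\alpha}-1)^2}\tfrac{6\log n}{\Delta_{A_t}^2}$ for that item $e\in S_t\subseteq \tilde A_t$. Combined with $|S_t|<d$, this establishes $G_{2,t}$.

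I do not anticipate a real obstacle; the core idea is just that the items observed often contribute little to the confidence-radius sum, so the remaining ``few'' items must each contribute a lot and hence individually still have a small counter. The only delicate point is the degenerate case $\alpha=1$, where $(\sqrt{\alpha}-1)^2=0$ makes the threshold in $G_{2,t}$ formally infinite; under the natural convention $1/0 = +\infty$ the second condition of $G_{2,t}$ is vacuous and the lemma reduces to the trivial dichotomy on $|S_t|$. The main piece of bookkeeping, as above, is the algebra of inverting the square-root confidence radius, which is routine.
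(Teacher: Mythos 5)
Your proposal is correct and is essentially the paper's argument: the paper also splits the confidence-radius sum from $\cF_t$ over $S_t$ and $\tilde{A}_t \setminus S_t$ with exactly the bounds $\frac{\Delta_{A_t}}{\sqrt{\alpha}}$ and $\frac{\Delta_{A_t}(\sqrt{\alpha}-1)}{\sqrt{\alpha}}$, merely phrased as a contradiction (assuming the complement event $\bar{G}_t$ and deriving $\Delta_{A_t} < \Delta_{A_t}$) rather than as your direct case split on $|S_t|$. Your handling of the degenerate case $\alpha = 1$ is a reasonable reading of the threshold in $G_{2,t}$ and does not change the substance.
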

\begin{proof}
By the definition of $S_t$, the following three events:\small
\begin{align*}
  G_{1, t} & = \set{|S_t| \geq d} \\
  G_{2, t} & = \set{|S_t| < d, \ \left[\exists e \in \tilde{A}_t: T_{t - 1}(e) \leq
  \frac{6 \alpha d^2 \log n}{(\sqrt{\alpha} - 1)^2 \Delta_{A_t}^2}\right]} \\
  \bar{G}_t & = \set{|S_t| < d, \ \left[\forall e \in \tilde{A}_t: T_{t - 1}(e) >
  \frac{6 \alpha d^2 \log n}{(\sqrt{\alpha} - 1)^2 \Delta_{A_t}^2}\right]}
\end{align*}\normalsize
are exhaustive and mutually exclusive. Therefore, to prove that either $G_{1, t}$ or $G_{2, t}$ happens, it suffices to show that $\bar{G}_t$ does not happen. Suppose that event $\bar{G}_t$ happens. Then by the assumption that $\cF_t$ happens and from the definition of $\bar{G}_t$, it follows that:
\begin{align*}
  \Delta_{A_t}
  & \leq 2 \sum_{e \in \tilde{A}_t} \sqrt{\frac{1.5 \log n}{T_{t - 1}(e)}} \\
  & = 2 \sum_{e \in \tilde{A}_t \setminus S_t} \sqrt{\frac{1.5 \log n}{T_{t - 1}(e)}} +
  2 \sum_{e \in S_t} \sqrt{\frac{1.5 \log n}{T_{t - 1}(e)}} \\
  & < 2 \underbrace{|\tilde{A}_t \setminus S_t|}_{\leq K}
  \sqrt{\frac{1.5 \log n}{\alpha K^2 \frac{6}{\Delta_{A_t}^2} \log n}} + {} \\
  & \quad\,\, 2 \underbrace{|S_t|}_{\leq d}
  \sqrt{\frac{1.5 \log n}{\frac{\alpha d^2}{(\sqrt{\alpha} - 1)^2} \frac{6}{\Delta_{A_t}^2} \log n}} \\
  & \leq \frac{\Delta_{A_t}}{\sqrt{\alpha}} + \frac{\Delta_{A_t} (\sqrt{\alpha} - 1)}{\sqrt{\alpha}}
  = \Delta_{A_t}\,.
\end{align*}
This is clearly a contradiction. Therefore, event $\bar{G}_t$ cannot happen; and either $G_{1, t}$ or $G_{2, t}$ happens.
\end{proof}

Now we are ready to prove Theorem~\ref{thm:K43 one gap}.

\begin{proof}
A detailed proof is in Appendix~\ref{sec:proof K43 one gap}. The key idea is to bound the number of times that events $G_{1, t}$ and $G_{2, t}$ happen in $n$ steps. Based on these bounds, the regret associated with both events is bounded as:
\begin{align*}
  \hat{R}(n) \leq
  \left(\frac{\alpha}{d} K^2 + \frac{\alpha d^2}{(\sqrt{\alpha} - 1)^2}\right) L \frac{6}{\Delta} \log n\,.
\end{align*}
Finally, we choose $\alpha = 4$ and $d = K^\frac{2}{3}$, and substitute the above upper bound into inequality~\eqref{eq:regret bound}.
\end{proof}

Theorem~\ref{thm:K43 one gap} can be generalized to the problems with different gaps. Let $\Delta_{e, \min}$ be the minimum gap of any suboptimal solution that contains item $e \in \tilde{E}$:
\begin{align}
  \Delta_{e, \min}
  & = \min_{A \in \Theta: e \in A, \Delta_A > 0} \Delta_A \label{eq:min gap} \\
  & = f(A^\ast, \bar{w}) - \max_{A \in \Theta: e \in A, \Delta_A > 0} f(A, \bar{w})\,, \nonumber
\end{align}
where $\tilde{E} = E \setminus A^\ast$ is the set of \emph{subptimal items}, the items that do not appear in the optimal solution. Then the regret of $\combucb$ is bounded as follows.

\begin{theorem}
\label{thm:K43} In any $(L, K)$ stochastic combinatorial semi-bandit, the regret of $\combucb$ is bounded as:
\begin{align*}
  R(n) \leq \sum_{e \in \tilde{E}} K^\frac{4}{3} \frac{96}{\Delta_{e, \min}} \log n +
  \left(\frac{\pi^2}{3} + 1\right) K L\,.
\end{align*}
\end{theorem}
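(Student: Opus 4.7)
The plan is to retain the structure of Theorem~\ref{thm:K43 one gap} but \emph{charge regret to individual items} rather than aggregating it at the step level. First I would apply Lemma~\ref{lem:regret decomposition} to reduce the task to bounding $\EE{\hat{R}(n)}$, and keep the events $G_{1,t}$ and $G_{2,t}$ exactly as in~\eqref{eq:event 1}--\eqref{eq:event 2}. Because their definitions already involve the step-specific gap $\Delta_{A_t}$ rather than a single global $\Delta$, Lemma~\ref{lem:K43 events} carries over unchanged and still guarantees that every $t$ with $\cF_t$ lies in exactly one of $G_{1,t}$, $G_{2,t}$.

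The new ingredient is to rewrite $\Delta_{A_t}$ as a sum over items in $\tilde{A}_t$ before bounding. On $G_{1,t}$, $|S_t|\ge d$ yields $\Delta_{A_t}\I{G_{1,t}}\le \frac{1}{d}\sum_{e\in S_t}\Delta_{A_t}$, and swapping the order of summation converts the $G_{1,t}$ contribution to $\hat{R}(n)$ into
\[
\frac{1}{d}\sum_{e\in\tilde{E}}\sum_{t}\Delta_{A_t}\I{e\in S_t,\,G_{1,t}}.
\]
On $G_{2,t}$ I would pick a witness item $e_t^\star\in\tilde{A}_t$ satisfying the small-counter condition in~\eqref{eq:event 2} and assign the entire step regret to $e_t^\star$, likewise producing an outer sum over $e\in\tilde{E}$.

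The main technical step is then to bound, for each fixed $e$, the resulting inner sum. The key observation is that $e\in\tilde{A}_t$ combined with $\cF_t$ forces $\Delta_{A_t}\ge \Delta_{e,\min}$ by~\eqref{eq:min gap}, and combining this with the defining inequality of $S_t$ yields two facts at once: (i)~$T_{t-1}(e)\le N_e\eqdef\lfloor 6\alpha K^2\log n/\Delta_{e,\min}^2\rfloor$ at every such step, so since $T_{t-1}(e)$ strictly increases whenever $e\in A_t$ there are at most $N_e$ of these steps and the corresponding values $T_{t-1}(e)$ are distinct positive integers; and (ii)~$\Delta_{A_t}\le K\sqrt{6\alpha\log n/T_{t-1}(e)}$. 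The standard bound $\sum_{s=1}^{N} s^{-1/2}\le 2\sqrt{N}$ then gives
\[
\sum_{t}\Delta_{A_t}\I{e\in S_t,\,G_{1,t}}\;\le\;K\sqrt{6\alpha\log n}\sum_{s=1}^{N_e}\frac{1}{\sqrt{s}}\;\le\;\frac{12\alpha K^2\log n}{\Delta_{e,\min}}.
\]
The identical argument on $G_{2,t}$, using the smaller threshold $6\alpha d^2/((\sqrt{\alpha}-1)^2\Delta_{A_t}^2)\log n$, produces a per-item contribution of $12\alpha d^2\log n/((\sqrt{\alpha}-1)^2\Delta_{e,\min})$.

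Summing the two per-item contributions gives
\[
\EE{\hat{R}(n)}\;\le\;\sum_{e\in\tilde{E}}\!\left(\frac{12\alpha K^2}{d}+\frac{12\alpha d^2}{(\sqrt{\alpha}-1)^2}\right)\!\frac{\log n}{\Delta_{e,\min}};
\]
taking $\alpha=4$ and $d=K^{2/3}$ makes both terms equal to $48K^{4/3}\log n/\Delta_{e,\min}$, so their sum is $96K^{4/3}\log n/\Delta_{e,\min}$ per item, and combining with Lemma~\ref{lem:regret decomposition} finishes the theorem. I expect the delicate step to be~(ii): using the \emph{lower} bound $\Delta_{A_t}\ge\Delta_{e,\min}$ (to cap the number of relevant steps) and the \emph{upper} bound $\Delta_{A_t}\le K\sqrt{6\alpha\log n/T_{t-1}(e)}$ (to control the magnitude of each term) simultaneously is what collapses the heterogeneous-gap sum to a clean $1/\sqrt{s}$ series and produces the per-item $1/\Delta_{e,\min}$ scaling rather than the naive $1/\Delta_{e,\min}^2$.
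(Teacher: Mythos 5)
Your proposal is correct and reaches exactly the stated constant, but the key per-item step is handled by a different device than the paper's. The skeleton coincides: Lemma~\ref{lem:regret decomposition}, the exhaustiveness of $G_{1,t}$ and $G_{2,t}$ from Lemma~\ref{lem:K43 events}, charging each $G_{1,t}$ step to the at least $d$ under-observed items with weight $\Delta_{A_t}/d$ and each $G_{2,t}$ step to an under-observed item with weight $\Delta_{A_t}$, and the final choice $\alpha = 4$, $d = K^{2/3}$ — this is precisely the paper's introduction of the item-specific events $G_{e,1,t}$ and $G_{e,2,t}$. Where you diverge is in bounding the per-item sum with heterogeneous gaps: the paper orders the gaps $\Delta_{e,1} \geq \dots \geq \Delta_{e,N_e}$ of the suboptimal solutions containing $e$, bounds the worst-case assignment of counter values to gap levels, and invokes the telescoping inequality \eqref{eq:kveton2014} (Lemma 3 of Kveton \etal) to obtain the factor $2/\Delta_{e,\min}$; you instead invert the membership condition of $S_t$ into $\Delta_{A_t} \leq K\sqrt{6\alpha \log n / T_{t-1}(e)}$, use $\Delta_{A_t} \geq \Delta_{e,\min}$ (valid since $e \in \tilde{A}_t$ and $A_t$ is suboptimal) to cap the number of relevant steps by $N_e$, note that the counter values $T_{t-1}(e)$ at those steps are distinct because $e$ is observed at each of them, and sum $\sum_{s=1}^{N_e} s^{-1/2} \leq 2\sqrt{N_e}$. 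Both routes give exactly $12\alpha K^2 \log n /(d\,\Delta_{e,\min})$ and $12\alpha d^2\log n/((\sqrt{\alpha}-1)^2\Delta_{e,\min})$, hence the same $96 K^{4/3}$ constant. Your version is more self-contained (no appeal to the external telescoping lemma) and makes explicit why the bound scales as $1/\Delta_{e,\min}$ rather than $1/\Delta_{e,\min}^2$; the paper's version has the advantage that the same ordered-gaps/telescoping template is reused verbatim in the proofs of Theorems~\ref{thm:K} and \ref{thm:gap-free}, though your counter-inversion argument would adapt there as well.
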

\begin{proof}
A detailed proof is in Appendix~\ref{sec:proof K43}. The key idea is to define item-specific variants of events $G_{1, t}$ and $G_{2, t}$, $G_{e, 1, t}$ and $G_{e, 2, t}$; and associate $\frac{\Delta_{A_t}}{d}$ and $\Delta_{A_t}$ regret with $G_{e, 1, t}$ and $G_{e, 2, t}$, respectively. Then, for each item $e$, we order the events from the largest gap to the smallest, and show that the total regret is bounded as:
\begin{align*}
  \hat{R}(n) < \sum_{e \in \tilde{E}}
  \left(\frac{\alpha}{d} K^2 + \frac{\alpha d^2}{(\sqrt{\alpha} - 1)^2}\right)
  \frac{12}{\Delta_{e, \min}} \log n\,.
\end{align*}
Finally, we choose $\alpha = 4$ and $d = K^\frac{2}{3}$, and substitute the above upper bound into inequality~\eqref{eq:regret bound}.
\end{proof}


\section{$O(K)$ Upper Bounds}
\label{sec:K upper bounds}

In this section, we improve on the results in Section~\ref{sec:K43 upper bounds} and derive $O(K L (1 / \Delta) \log n)$ upper bounds on the $n$-step regret of $\combucb$. In Theorem~\ref{thm:K one gap}, we assume that the gaps of all suboptimal solutions are identical. In Theorem~\ref{thm:K}, we relax this assumption.

The key step in our analysis is that we define a cascade of infinitely-many mutually-exclusive events and then bound the number of times that these events happen when a suboptimal solution is chosen. The events are parametrized by two decreasing sequences of constants:
\begin{align}
  1 = \beta_0 > \beta_1 & > \beta_2 > \ldots > \beta_k > \ldots \label{eq:condition 1} \\
  \alpha_1 & > \alpha_2 > \ldots > \alpha_k > \ldots \label{eq:condition 2}
\end{align}
such that $\lim_{i \to \infty} \alpha_i = \lim_{i \to \infty} \beta_i = 0$. We define:
\begin{align*}
  m_{i, t} = \alpha_i \frac{K^2}{\Delta_{A_t}^2} \log n
\end{align*}
and assume that $m_{i, t} = \infty$ when $\Delta_{A_t} = 0$. The events at time $t$ are defined as:
\begin{align}
  G_{1, t} = \{
  & \text{at least $\beta_1 K$ items in $\tilde{A}_t$ were observed} \label{eq:event i} \\
  & \text{at most $m_{1, t}$ times}\}\,, \nonumber \\
  G_{2, t} = \{
  & \text{less than $\beta_1 K$ items in $\tilde{A}_t$ were observed} \nonumber \\
  & \text{at most $m_{1, t}$ times}, \nonumber \\
  & \text{at least $\beta_2 K$ items in $\tilde{A}_t$ were observed} \nonumber \\
  & \text{at most $m_{2, t}$ times}\}\,, \nonumber \\
  & \vdots \nonumber \\
  G_{i, t} = \{
  & \text{less than $\beta_1 K$ items in $\tilde{A}_t$ were observed} \nonumber \\
  & \text{at most $m_{1, t}$ times}, \nonumber \\
  & \dots, \nonumber \\
  & \text{less than $\beta_{i - 1} K$ items in $\tilde{A}_t$ were observed} \nonumber \\
  & \text{at most $m_{i - 1, t}$ times}, \nonumber \\
  & \text{at least $\beta_i K$ items in $\tilde{A}_t$ were observed} \nonumber \\
  & \text{at most $m_{i, t}$ times}\}\,, \nonumber \\
  & \vdots \nonumber
\end{align}
The following lemma establishes a sufficient condition under which events $G_{i, t}$ are exhaustive. This is the key step in the proofs in this section.

\begin{lemma}
\label{lem:complete system} Let $(\alpha_i)$ and $(\beta_i)$ be defined as in \eqref{eq:condition 1} and \eqref{eq:condition 2}, respectively; and let:
\begin{align}
  \sqrt{6} \sum_{i = 1}^\infty \frac{\beta_{i - 1} - \beta_i}{\sqrt{\alpha_i}} \leq 1\,.
  \label{eq:condition 3}
\end{align}
Let event $\cF_t$ happen. Then event $G_{i, t}$ happens for some $i$.
\end{lemma}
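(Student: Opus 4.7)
The plan is to mirror the structure of the argument for Lemma~\ref{lem:K43 events}: the events $G_{1,t}, G_{2,t}, \ldots$ are mutually exclusive by construction, so it suffices to show that their union is forced whenever $\cF_t$ holds. Equivalently, I would assume toward contradiction the ``residual event'' $\bar{G}_t = \{|S_{i,t}| < \beta_i K \text{ for every } i \geq 1\}$, where $S_{i,t} = \{e \in \tilde{A}_t : T_{t-1}(e) \leq m_{i,t}\}$, and derive a contradiction with the inequality $\Delta_{A_t} \leq 2 \sum_{e \in \tilde{A}_t} c_{n,T_{t-1}(e)}$ supplied by $\cF_t$.

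Set $S_{0,t} = \tilde{A}_t$, so $|S_{0,t}| \leq K = \beta_0 K$. Since $\alpha_i \to 0$ forces $m_{i,t} \to 0$ and $T_{t-1}(e) \geq 1$ for every $e$ (guaranteed by the initialization phase), the nested sets $S_{i,t}$ are eventually empty. Hence $\tilde{A}_t$ decomposes as the disjoint union $\bigsqcup_{i \geq 1}(S_{i-1,t} \setminus S_{i,t})$, and any $e \in S_{i-1,t} \setminus S_{i,t}$ satisfies $T_{t-1}(e) > m_{i,t}$, giving the strict bound $c_{n,T_{t-1}(e)} = \sqrt{1.5 \log n / T_{t-1}(e)} < (\Delta_{A_t}/K)\sqrt{1.5/\alpha_i}$. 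Writing $a_i = (\Delta_{A_t}/K)\sqrt{1.5/\alpha_i}$, summing over the partition, and using $|S_{i-1,t}\setminus S_{i,t}| = |S_{i-1,t}| - |S_{i,t}|$, I then apply Abel summation: $\sum_{i \geq 1}(|S_{i-1,t}| - |S_{i,t}|)\,a_i = |S_{0,t}|\,a_1 + \sum_{i \geq 1} |S_{i,t}|(a_{i+1} - a_i)$, where the boundary term at infinity vanishes because $|S_{i,t}| \to 0$. Substituting the weak bound $|S_{0,t}| \leq \beta_0 K$ together with the strict bounds $|S_{i,t}| < \beta_i K$ for $i \geq 1$ (using that $a_{i+1} > a_i$ since $\alpha_i$ is decreasing) collapses the expression to $K \sum_{j \geq 1}(\beta_{j-1} - \beta_j)\,a_j = \Delta_{A_t} \sqrt{1.5}\sum_{j \geq 1}(\beta_{j-1}-\beta_j)/\sqrt{\alpha_j}$.

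Condition~\eqref{eq:condition 3} reads $\sqrt{6}\sum_j (\beta_{j-1}-\beta_j)/\sqrt{\alpha_j} \leq 1$, i.e.\ $\sqrt{1.5}\sum_j (\beta_{j-1}-\beta_j)/\sqrt{\alpha_j} \leq 1/2$. Chaining everything, $\sum_{e \in \tilde{A}_t} c_{n,T_{t-1}(e)} < \Delta_{A_t}/2$, and doubling contradicts the defining inequality of $\cF_t$. The main obstacle is making the summation-by-parts step produce exactly the coefficients $(\beta_{j-1}-\beta_j)/\sqrt{\alpha_j}$ in condition~\eqref{eq:condition 3}; this depends on correctly pairing the weak upper bound on $|S_{0,t}|$ with the strict upper bounds on $|S_{i,t}|$ for $i \geq 1$, and on preserving a strict inequality through the chain (which is secured by $\tilde{A}_t \neq \emptyset$, a consequence of $\Delta_{A_t} > 0$ in $\cF_t$).
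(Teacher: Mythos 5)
Your proposal is correct and follows essentially the same route as the paper: it reduces to showing the residual event $\bigcap_i \{|S_{i,t}| < \beta_i K\}$ is impossible under $\cF_t$, partitions $\tilde{A}_t$ by the nested sets $S_{i,t}$ (which are eventually empty since $m_{i,t} \to 0$ and $T_{t-1}(e) \geq 1$), and uses summation by parts to reach the coefficients in \eqref{eq:condition 3} and a contradiction. The only cosmetic difference is that you carry out the Abel-summation step inline, whereas the paper isolates it as Lemma~\ref{lem:little helper}.
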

\begin{proof}
We fix $t$ such that $\Delta_{A_t} > 0$. Because $t$ is fixed, we use shorthands $G_i = G_{i, t}$ and $m_i = m_{i, t}$. Let:
\begin{align*}
  S_i = \cset{e \in \tilde{A}_t}{T_{t - 1}(e) \leq m_i}
\end{align*}
be the set of items in $\tilde{A}_t$ that are not observed ``sufficiently often'' under event $G_i$. Then event $G_i$ can be written as:
\begin{align*}
  \textstyle
  G_i = \left(\bigcap_{j = 1}^{i - 1} \set{|S_j| < \beta_j K}\right) \cap \set{|S_i| \geq \beta_i K}\,.
\end{align*}
As in Lemma~\ref{lem:K43 events}, we prove that event $G_i$ happens for some $i$ by showing that the event that none of our events happen cannot happen. Note that this event can be written as:
\begin{align*}
  \bar{G}
  & = \overline{\bigcup_{i = 1}^\infty G_i} \\
  & = \bigcap_{i = 1}^\infty \Bigg[\Bigg(\bigcup_{j = 1}^{i - 1} \set{|S_j| \geq \beta_j K}\Bigg) \cup
  \set{|S_i| < \beta_i K}\Bigg] \\
  & = \bigcap_{i = 1}^\infty \set{|S_i| < \beta_i K}\,.
\end{align*}
Let $\bar{S_i} = \tilde{A}_t \setminus S_i$ and $S_0 = \tilde{A}_t$. Then by the definitions of $\bar{S_i}$ and $S_i$, $\bar{S}_{i - 1} \subseteq \bar{S}_i$ for all $i > 0$. Furthermore, note that $\lim_{i \to \infty} m_i = 0$. So there must exist an integer $j$ such that $\bar{S}_i = \tilde{A}_t$ for all $i > j$, and $\tilde{A}_t = \bigcup_{i = 1}^\infty (\bar{S}_i \setminus \bar{S}_{i - 1})$. Finally, by the definition of $\bar{S}_i$, $T_{t - 1}(e) > m_i$ for all $e \in \bar{S}_i$. Now suppose that event $\bar{G}$ happens. Then:
\begin{align*}
  \sum_{e \in \tilde{A}_t} \frac{1}{\sqrt{T_{t - 1}(e)}}
  & < \sum_{i = 1}^\infty \sum_{e \in \bar{S}_i \setminus \bar{S}_{i - 1}} \frac{1}{\sqrt{m_i}} \\
  & = \sum_{i = 1}^\infty \frac{|\bar{S}_i \setminus \bar{S}_{i - 1}|}{\sqrt{m_i}} \\
  & < \sum_{i = 1}^\infty \frac{(\beta_{i - 1} - \beta_i) K}{\sqrt{m_i}}\,,
\end{align*}
where the last inequality is due to Lemma~\ref{lem:little helper} (Appendix~\ref{sec:lemmas}). In addition, let event $\cF_t$ happen. Then:
\begin{align*}
  \Delta_{A_t}
  & \leq 2 \sum_{e \in \tilde{A}_t} \sqrt{\frac{1.5 \log n}{T_{t - 1}(e)}} \\
  & < \sqrt{6 \log n} \sum_{i = 1}^\infty \frac{(\beta_{i - 1} - \beta_i) K}{\sqrt{m_i}} \\
  & = \Delta_{A_t} \sqrt{6} \sum_{i = 1}^\infty \frac{\beta_{i - 1} - \beta_i}{\sqrt{\alpha_i}}
  \leq \Delta_{A_t}\,,
\end{align*}
where the last inequality is due to our assumption in \eqref{eq:condition 3}. The above is clearly a contradiction. As a result, $\bar{G}$ cannot happen, and event $G_i$ must happen for some $i$.
\end{proof}

\begin{theorem}
\label{thm:K one gap} In any $(L, K, \Delta)$ stochastic combinatorial semi-bandit where $\Delta_{A} = \Delta$ for all suboptimal solutions $A$, the regret of $\combucb$ is bounded as:
\begin{align*}
  R(n) \leq K L \frac{267}{\Delta} \log n + \left(\frac{\pi^2}{3} + 1\right) K L\,.
\end{align*}
\end{theorem}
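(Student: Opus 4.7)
The plan is to combine Lemma~\ref{lem:regret decomposition} with the cascade decomposition of Lemma~\ref{lem:complete system} and a pigeonhole-style counting argument on the observation counters. By Lemma~\ref{lem:regret decomposition} it suffices to bound $\EE{\hat R(n)}$. Fix sequences $(\alpha_i)_{i\ge 1}$ and $(\beta_i)_{i\ge 0}$ satisfying \eqref{eq:condition 1}--\eqref{eq:condition 3}. Since all suboptimal gaps equal $\Delta$, the threshold $m_{i,t}$ reduces to a $t$-independent constant $m_i = \alpha_i K^2 \Delta^{-2} \log n$, and Lemma~\ref{lem:complete system} tells us that the events $\{G_{i,t}\}_{i\ge 1}$ are pairwise disjoint and jointly exhaust $\cF_t$. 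Hence
\begin{align*}
\hat R(n) \;\leq\; \Delta \sum_{i=1}^\infty \sum_{t=t_0}^n \I{G_{i,t}}.
\end{align*}

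The heart of the proof is bounding $\sum_t \I{G_{i,t}}$. Whenever $G_{i,t}$ occurs, at least $\beta_i K$ items $e \in \tilde A_t$ satisfy $T_{t-1}(e) \leq m_i$, and each such item has its counter incremented at step $t$. For any fixed item $e$, there are at most $m_i$ rounds at which $T_{t-1}(e) \leq m_i$ and $e \in A_t$ simultaneously, since the counter takes each value in $\{1,\dots,m_i\}$ at most once. Summing over items in $\tilde E$ and using $|\tilde E| \le L$,
\begin{align*}
\beta_i K \sum_{t=t_0}^n \I{G_{i,t}} \;\leq\; \sum_{t=t_0}^n \sum_{e \in \tilde A_t} \I{T_{t-1}(e) \leq m_i} \;\leq\; L m_i,
\end{align*}
so $\sum_{t} \I{G_{i,t}} \leq L m_i/(\beta_i K) = (\alpha_i/\beta_i)\, L K \Delta^{-2} \log n$. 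Combining the two displays,
\begin{align*}
\hat R(n) \;\leq\; \frac{KL\log n}{\Delta} \sum_{i=1}^\infty \frac{\alpha_i}{\beta_i}.
\end{align*}

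It remains to choose $(\alpha_i),(\beta_i)$ so as to minimize $\sum_i \alpha_i/\beta_i$ subject to \eqref{eq:condition 3}. I would use the geometric ansatz $\beta_i = r^i$ and, guided by a one-line Lagrangian calculation, take $\alpha_i = C r^{4i/3}$; condition \eqref{eq:condition 3} then pins down the minimal feasible $C$, and summing two geometric series gives $\sum \alpha_i/\beta_i = 6(1+r^{1/3}+r^{2/3})^2/[r(1-r^{1/3})]$. A one-dimensional numerical minimization over $r \in (0,1)$ (optimum near $r \approx 0.216$) yields a value just below $267$. Plugging this into Lemma~\ref{lem:regret decomposition} yields the claimed bound.

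The main obstacle is conceptual rather than computational: one has to recognize that the regret contributed by $G_{i,t}$ must be amortized against the $\beta_i K$ simultaneous counter increments it forces, rather than charged item-by-item as in prior work; this is exactly the idea that saves a factor of $K$ compared to the earlier analyses. Once this amortization is in place, any geometric choice of $(\alpha_i),(\beta_i)$ already yields an $O(KL\Delta^{-1}\log n)$ bound, and only the numerical constant $267$ requires the tuning above.
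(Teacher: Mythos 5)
Your proposal is correct and follows essentially the same route as the paper's proof: Lemma~\ref{lem:regret decomposition} plus the exhaustiveness of the cascade events from Lemma~\ref{lem:complete system}, the same pigeonhole amortization showing $G_{i,t}$ occurs at most $L m_i/(\beta_i K)$ times, the resulting bound $\hat R(n) \leq K L \Delta^{-1}\big[\sum_i \alpha_i/\beta_i\big]\log n$, and a geometric choice of $(\alpha_i),(\beta_i)$. The only (harmless) difference is that you tune a one-parameter family $\beta_i = r^i$, $\alpha_i = C r^{4i/3}$ instead of the paper's two-parameter family $\beta_i=\beta^i$, $\alpha_i = d\alpha^i$ — your Lagrangian ansatz in fact contains the paper's numerical optimum ($\beta = \sqrt5-2 \approx 0.236$, $\alpha = \beta^{4/3}\approx 0.1459$), and although your stated minimizer $r \approx 0.216$ is slightly off (the true optimum is $r=\sqrt5-2\approx 0.236$), the objective $6(1+r^{1/3}+r^{2/3})^2/[r(1-r^{1/3})]$ is below $267$ at either point, so the claimed bound follows.
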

\begin{proof}
A detailed proof is in Appendix~\ref{sec:proof K one gap}. The key idea is to bound the number of times that event $G_{i, t}$ happens in $n$ steps for any $i$. Based on this bound, the regret due to all events $G_{i, t}$ is bounded as:
\begin{align*}
  \hat{R}(n) \leq
  K L \frac{1}{\Delta} \left[\sum_{i = 1}^\infty \frac{\alpha_i}{\beta_i}\right] \log n\,,
\end{align*}
where $\hat{R}(n)$ is defined in (\ref{eq:Rhat}). Finally, we choose $(\alpha_i)$ and $(\beta_i)$, and apply the above upper bound in inequality~\eqref{eq:regret bound}.
\end{proof}

Now we generalize Theorem~\ref{thm:K one gap} to arbitrary gaps.

\begin{theorem}
\label{thm:K} In any $(L, K)$ stochastic combinatorial semi-bandit, the regret of $\combucb$ is bounded as:
\begin{align*}
  R(n) \leq \sum_{e \in \tilde{E}} K \frac{534}{\Delta_{e, \min}} \log n +
  \left(\frac{\pi^2}{3} + 1\right) K L\,,
\end{align*}
where $\Delta_{e, \min}$ is the minimum gap of suboptimal solutions that contain item $e$, which is defined in \eqref{eq:min gap}.
\end{theorem}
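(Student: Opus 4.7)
The approach is to mimic the generalization from Theorem~\ref{thm:K43 one gap} to Theorem~\ref{thm:K43}, but now using the full cascade $(G_{i,t})_{i \geq 1}$ of Section~\ref{sec:K upper bounds} instead of only two events. Starting from Lemma~\ref{lem:regret decomposition}, it suffices to bound $\EE{\hat{R}(n)}$ with $\hat{R}(n) = \sum_{t} \Delta_{A_t} \I{\cF_t}$; initialization cost and confidence-interval failures are already absorbed into the $(\pi^2/3 + 1) KL$ term. By Lemma~\ref{lem:complete system} together with the mutual exclusivity of the $G_{i,t}$'s, on every realization of $\cF_t$ exactly one $G_{i,t}$ fires, and on that event the set $S_{i,t} = \set{e \in \tilde{A}_t : T_{t-1}(e) \leq m_{i,t}}$ has cardinality at least $\beta_i K$. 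I distribute the regret of the firing uniformly over $S_{i,t}$:
\begin{align*}
  \Delta_{A_t} \I{G_{i,t}} \leq \sum_{e \in \tilde{E}} \frac{\Delta_{A_t}}{\beta_i K} \I{G_{e,i,t}},
\end{align*}
where $G_{e,i,t} = G_{i,t} \cap \set{e \in S_{i,t}}$. This mirrors the two-event distribution used in Theorem~\ref{thm:K43} but with the adaptive floor $\beta_i K$ coming from the cascade.

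Next I fix $e \in \tilde{E}$ and level $i \geq 1$, and enumerate the firings of $G_{e,i,t}$ chronologically as $t_1 < t_2 < \dots < t_M$. Because $e \in A_{t_k}$ at every firing and \texttt{Init} sets $T_{t_0 - 1}(e) = 1$, an easy induction yields $T_{t_k - 1}(e) \geq k$. Combining this with the constraint $T_{t_k - 1}(e) \leq m_{i, t_k} = \alpha_i K^2 \log(n)/\Delta_{A_{t_k}}^2$ built into $G_{e,i,t_k}$ gives $\Delta_{A_{t_k}} \leq K\sqrt{\alpha_i \log(n)/k}$; applying this bound at $k = M$ together with $\Delta_{A_{t_M}} \geq \Delta_{e,\min}$ forces $M \leq \alpha_i K^2 \log(n)/\Delta_{e,\min}^2$. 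The standard telescoping $\sum_{k=1}^M k^{-1/2} \leq 2\sqrt{M}$ then produces
\begin{align*}
  \sum_{k=1}^{M} \frac{\Delta_{A_{t_k}}}{\beta_i K} \leq \frac{\sqrt{\alpha_i \log n}}{\beta_i} \cdot 2\sqrt{M} \leq \frac{2 \alpha_i K \log n}{\beta_i \Delta_{e, \min}}.
\end{align*}

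Summing over $i \geq 1$ and $e \in \tilde{E}$ and feeding the result back into inequality~\eqref{eq:regret bound} yields a total regret of the form
\begin{align*}
  R(n) \leq 2 \left[\sum_{i \geq 1} \frac{\alpha_i}{\beta_i}\right] K \log n \sum_{e \in \tilde{E}} \frac{1}{\Delta_{e, \min}} + \left(\frac{\pi^2}{3} + 1\right) K L.
\end{align*}
To recover the stated constant $534$, I reuse exactly the admissible sequences $(\alpha_i), (\beta_i)$ selected for Theorem~\ref{thm:K one gap}, which satisfy condition~\eqref{eq:condition 3} and realize $\sum_i \alpha_i/\beta_i = 267$, so that the prefactor becomes $2 \cdot 267 = 534$. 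The main subtlety to watch is the chronological lower bound $T_{t_k - 1}(e) \geq k$ rather than $\geq k - 1$: the extra unit supplied by \texttt{Init} lets the $\sum_{k=1}^M k^{-1/2}$ sum begin cleanly at $k=1$ and prevents an isolated $k=1$ boundary term that would, after summation over levels, contribute a divergent $\sum_i 1/\beta_i$ (since $\beta_i \to 0$). Everything else—the uniform distribution over $S_{i,t}$, the chronological counting, and the final series choice—is routine given the machinery of Section~\ref{sec:K upper bounds}.
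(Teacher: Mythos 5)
Your proposal is correct, and its skeleton coincides with the paper's proof: Lemma~\ref{lem:regret decomposition} plus Lemma~\ref{lem:complete system}, the item-specific events $G_{e,i,t}$, the charge of $\Delta_{A_t}/(\beta_i K)$ to each insufficiently observed item (valid because $|S_{i,t}| \geq \beta_i K$ on $G_{i,t}$), and the final choice of the geometric sequences with $\sum_i \alpha_i/\beta_i < 267$ yielding the constant $534$. Where you genuinely diverge is the per-item, per-level counting step. The paper fixes $e$, sorts the gaps $\Delta_{e,1} \geq \dots \geq \Delta_{e,N_e}$ of the suboptimal solutions containing $e$, bounds the worst-case arrangement of the indicator sum (its inequality (b)), and then invokes the telescoping bound \eqref{eq:kveton2014} from Kveton \etal\ to get the factor $2/\Delta_{e,\min}$. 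You instead enumerate the firings of $G_{e,i,t}$ chronologically, use the fact that each firing increments $T_\cdot(e)$ together with ${\tt Init}$'s $T_{t_0-1}(e)=1$ to get $T_{t_k-1}(e) \geq k$, convert the event's constraint $T_{t_k-1}(e) \leq m_{i,t_k}$ into $\Delta_{A_{t_k}} \leq K\sqrt{\alpha_i \log n / k}$, cap $M \leq \alpha_i K^2 \log(n)/\Delta_{e,\min}^2$, and use $\sum_{k \leq M} k^{-1/2} \leq 2\sqrt{M}$. This is a pathwise argument that reproduces exactly the same $2/\Delta_{e,\min}$ factor, so the constants match; its advantage is self-containedness (no appeal to the external lemma and no implicit worst-case maximization over solution sequences), while its mild cost is the reliance on the initialization to start the counter at $1$ -- your closing remark correctly identifies that without it the $k=1$ terms would contribute $\sum_i 1/\beta_i$, which diverges for the paper's choice of $(\beta_i)$, so some separate handling would then be needed.
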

\begin{proof}
A detailed proof is in Appendix~\ref{sec:proof K}. The key idea is to introduce item-specific variants of events $G_{i, t}$, $G_{e, i, t}$, and associate $\frac{\Delta_{A_t}}{\beta_i K}$ regret with each of these events. Then, for each item $e$, we order the events from the largest gap to the smallest, and show that the total regret is bounded as:
\begin{align*}
  \hat{R}(n) <
  \sum_{e \in \tilde{E}} K \frac{2}{\Delta_{e, \min}}
  \left[\sum_{i = 1}^\infty \frac{\alpha_i}{\beta_i}\right] \log n\,,
\end{align*}
where $\hat{R}(n)$ is defined in (\ref{eq:Rhat}). Finally, we choose $(\alpha_i)$ and $(\beta_i)$, and apply the above upper bound in inequality~\eqref{eq:regret bound}.
\end{proof}

We also prove a gap-free bound.

\begin{theorem}
\label{thm:gap-free} In any $(L, K)$ stochastic combinatorial semi-bandit, the regret of $\combucb$ is bounded as:
\begin{align*}
  R(n) \leq 47 \sqrt{K L n \log n} + \left(\frac{\pi^2}{3} + 1\right) K L\,.
\end{align*}
\end{theorem}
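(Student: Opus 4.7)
The plan is to deduce the gap-free bound from Theorem~\ref{thm:K} by the standard threshold-peeling trick and then optimize the threshold. First, Lemma~\ref{lem:regret decomposition} reduces the task (up to the $\bigl(\frac{\pi^2}{3}+1\bigr)KL$ additive term) to bounding $\EE{\hat R(n)}$. Fix a parameter $\eps > 0$ and split
\begin{align*}
  \hat R(n) = \sum_{t=t_0}^n \Delta_{A_t}\I{\cF_t,\, \Delta_{A_t} \leq \eps} + \sum_{t=t_0}^n \Delta_{A_t}\I{\cF_t,\, \Delta_{A_t} > \eps}.
\end{align*}
The first sum is trivially at most $\eps n$, since at most $n$ rounds occur and each contributes regret at most $\eps$.

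For the second sum I would rerun the proof of Theorem~\ref{thm:K} restricted to steps with $\Delta_{A_t} > \eps$. The item-indexed cascade of events $G_{e,i,t}$ used there produces, in this restricted sum, a per-item contribution of at most $K\cdot 534\log n/\max\set{\Delta_{e,\min},\,\eps}$: the original derivation only uses a lower bound on the gap of the events being charged, so the extra constraint $\Delta_{A_t} > \eps$ simply replaces that lower bound by $\max\set{\Delta_{e,\min},\,\eps}$ (equivalently, after ordering the events for item $e$ from largest to smallest gap, we truncate the ordering at the first event with $\Delta_{A_t} \leq \eps$ and discard the tail, whose contribution is already accounted for in $\eps n$). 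Summing over the at most $L$ suboptimal items gives
\begin{align*}
  \sum_{t=t_0}^n \EE{\Delta_{A_t}\I{\cF_t,\, \Delta_{A_t} > \eps}} \leq \frac{534\,KL\log n}{\eps}.
\end{align*}

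Combining the two parts yields
\begin{align*}
  R(n) \leq \eps n + \frac{534\,KL\log n}{\eps} + \Bigl(\frac{\pi^2}{3}+1\Bigr)KL,
\end{align*}
and choosing $\eps = \sqrt{534\,KL\log n/n}$ balances the first two terms, producing $2\sqrt{534\,KL\,n\log n} \leq 47\sqrt{KL\,n\log n}$, which is the claimed bound.

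The main obstacle is the bookkeeping in the restricted rerun of Theorem~\ref{thm:K}: one must verify carefully that truncating the per-item cascade at the first event with $\Delta_{A_t}\leq\eps$ preserves the numerical constant $534$ rather than doubling or otherwise inflating it, so that the telescoping argument from largest to smallest gap continues to hand back a $1/\max\set{\Delta_{e,\min},\eps}$ term. Once this is checked, the constant $2\sqrt{534}<47$ drops out automatically.
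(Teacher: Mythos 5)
Your proposal is correct and follows essentially the same route as the paper: decompose $\hat R(n)$ at a threshold $\epsilon$, bound the small-gap part by $\epsilon n$, rerun the Theorem~\ref{thm:K} argument on the large-gap part with $\Delta_{e,\min}$ effectively replaced by $\epsilon$ (the telescoping bound \eqref{eq:kveton2014} then returns $2/\epsilon$, so the constant $534$ survives exactly as you hoped), and optimize $\epsilon = \sqrt{534\,KL\log n/n}$ to get $2\sqrt{534\,KLn\log n} < 47\sqrt{KLn\log n}$. The truncation worry you raise is resolved precisely as the paper does it, so no further work is needed.
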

\begin{proof}
The proof is in Appendix~\ref{sec:proof gap-free}. The key idea is to decompose the regret of $\combucb$ into two parts, where the gaps are larger than $\epsilon$ and at most $\epsilon$. We analyze each part separately and then set $\epsilon$ to get the desired result.
\end{proof}


\section{Lower Bounds}
\label{sec:lower bounds}

In this section, we derive two lower bounds on the $n$-step regret in stochastic combinatorial semi-bandits. One of the bounds is gap-dependent and the other one is gap-free.

\begin{figure*}[t]
  \centering
  \hspace{-0.15in}
  \includegraphics[width=4.4in, bb=1in 4.25in 6.5in 6.5in]{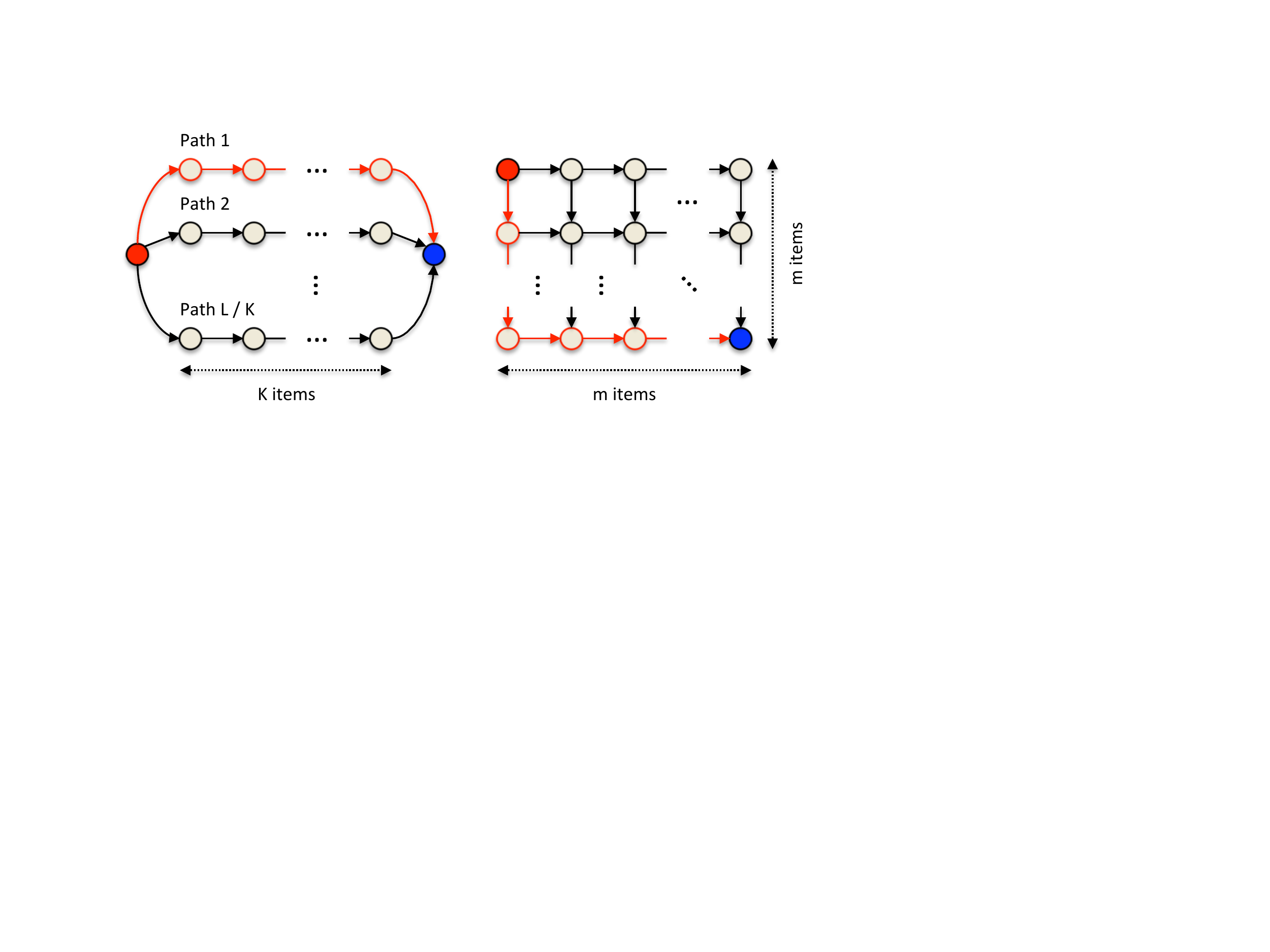}
  \raisebox{0.055in}{\includegraphics[width=2.4in, bb=2.75in 4.5in 5.75in 6.5in]{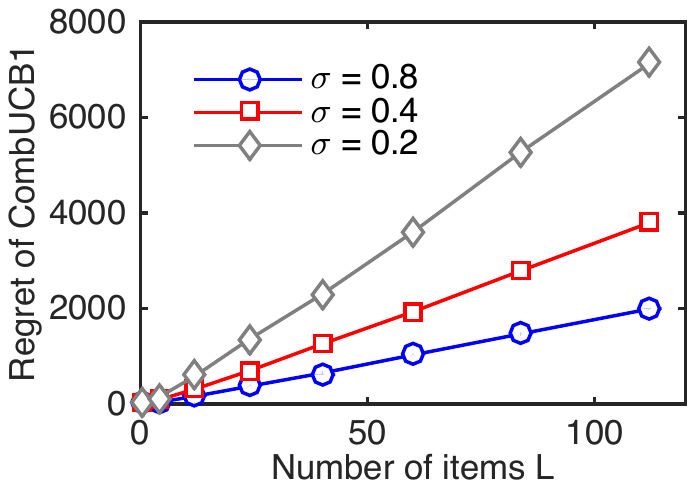}}
  \hspace{-0.4in} \makebox{} \\
  \hspace{0.09in} (a) \hspace{1.925in} (b) \hspace{2.37in} (c)
  \caption{{\bf a}. The $K$-path semi-bandit problem in Section~\ref{sec:lower bounds}. The red and blue nodes are the starting and end points of the paths, respectively. The optimal path is marked in red. {\bf b}. The grid-path problem in Section~\ref{sec:experiments}. The red and blue nodes are the starting and end points of the paths, respectively. The optimal path is marked in red. {\bf c}. The $n$-step regret of $\combucb$ on the grid-path problem.}
  \label{fig:main}
\end{figure*}

Our bounds are derived on a \emph{$K$-path semi-bandit} problem, which is illustrated in Figure~\ref{fig:main}a. The items in the ground set are $L$ path segments $E = \set{1, \dots, L}$. The feasible set $\Theta$ are $L / K$ paths, each of which contains $K$ unique items. Specifically, path $j$ contains items $(j - 1) K + 1, \dots, j K$. Without loss of generality, we assume that $L / K$ is an integer. The probability distribution $P$ over the weights of the items is defined as follows. The weights of the items in the same path are equal. The weights of the items in different paths are distributed independently. The weight of item $e$ is a Bernoulli random variable with mean:
\begin{align*}
  \bar{w}(e) =
  \begin{cases}
    0.5 & \text{item $e$ belongs to path $1$} \\
    0.5 - \Delta / K & \text{otherwise}\,,
  \end{cases}
\end{align*}
where $0 < \Delta / K < 0.5$. Note that our problem is designed such that $\Delta_{e, \min} = \Delta$ for any item $e$ in path $j > 1$.

The key observation is that our problem is equivalent to a $(L / K)$-arm Bernoulli bandit whose returns are scaled by $K$, when the learning agent knows that the weights of the items in the same path are equal. Therefore, we can derive lower bounds for our problem based on the existing lower bounds for Bernoulli bandits \cite{auer02nonstochastic,bubeck12regret,lai85asymptotically}.

Our gap-dependent lower bound is derived for the class of consistent algorithms, which is defined as follows. We say that the algorithm is \emph{consistent} if for any stochastic combinatorial semi-bandit, any suboptimal $A$, and any $\alpha > 0$, $\EE{T_n(A)} = o(n^\alpha)$, where $T_n(A)$ is the number of times that solution $A$ is chosen in $n$ steps. The restriction to the consistent algorithms is without loss of generality. In particular, an inconsistent algorithm is guaranteed to perform poorly on some semi-bandit, and therefore cannot achieve logarithmic regret on all semi-bandits, as $\combucb$.

\begin{proposition}
\label{prop:lower bound} For any $L$ and $K$ such that $L / K$ is an integer, and any $0 < \Delta / K < 0.5$, the regret of any consistent algorithm on the $K$-path semi-bandit problem is bounded from below as:
\begin{align*}
  \liminf_{n \to \infty} \frac{R(n)}{\log n} \geq
  \frac{(L - K) K}{4 \Delta}\,.
\end{align*}
\end{proposition}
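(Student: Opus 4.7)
The plan is to reduce the $K$-path semi-bandit to a classical $(L/K)$-arm Bernoulli bandit and then invoke the asymptotic lower bound of Lai and Robbins. The key structural feature we exploit is that in the $K$-path instance the weights of the $K$ items on a given path are perfectly correlated (in fact equal), so observing the semi-bandit feedback on a path is informationally equivalent to observing a single Bernoulli sample from that path.

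Concretely, given any consistent semi-bandit algorithm $\mathcal{A}$, I would construct a bandit strategy $\mathcal{A}'$ for an $(L/K)$-arm Bernoulli bandit with arm means $\mu_1 = 0.5$ and $\mu_j = 0.5 - \Delta/K$ for $j > 1$. When $\mathcal{A}$ requests path $j$, $\mathcal{A}'$ pulls arm $j$, observes a Bernoulli sample $X$, and returns to $\mathcal{A}$ the semi-bandit feedback $\{(e, X) : e \in A_j\}$. This simulation exactly reproduces the distribution of observations $\mathcal{A}$ would see in the semi-bandit, and the per-step reward $\sum_{e \in A_j} w_t(e) = K X$ is $K$ times the bandit reward. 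Hence $\mathcal{A}'$ is a consistent policy for the bandit and the expected number of times $\mathcal{A}$ chooses path $j$ in $n$ steps equals $\EE{T_n^{\mathrm{bandit}}(j)}$.

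Next, apply the Lai--Robbins lower bound to the bandit: for every suboptimal arm $j$,
\begin{align*}
  \liminf_{n \to \infty} \frac{\EE{T_n(j)}}{\log n} \geq \frac{1}{\mathrm{KL}(\mu_j, \mu_1)}\,.
\end{align*}
Using the standard inequality $\mathrm{KL}(p,q) \leq (p-q)^2 / (q(1-q))$ for Bernoulli distributions with $q = 0.5$, one obtains $\mathrm{KL}(0.5 - \Delta/K,\, 0.5) \leq 4\Delta^2 / K^2$, so $\liminf_n \EE{T_n(j)}/\log n \geq K^2/(4\Delta^2)$. Since each pull of a suboptimal path contributes exactly $\Delta$ to the regret, and there are $L/K - 1$ suboptimal paths, summing gives
\begin{align*}
  \liminf_{n \to \infty} \frac{R(n)}{\log n}
  \geq \Delta \cdot \Big(\tfrac{L}{K} - 1\Big) \cdot \frac{K^2}{4 \Delta^2}
  = \frac{(L - K) K}{4 \Delta}\,,
\end{align*}
as claimed.

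The only step that requires care is the reduction itself: one must verify that the simulated feedback has exactly the same joint distribution as the true semi-bandit feedback (which it does because all items on a path share a common Bernoulli weight), and that consistency of $\mathcal{A}$ on every semi-bandit instance implies consistency of $\mathcal{A}'$ on every Bernoulli bandit instance of the form above (which follows because the reduction is well-defined for every choice of Bernoulli parameters). Everything else is a routine calculation with the Pinsker-type bound on Bernoulli KL divergence.
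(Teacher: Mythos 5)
Your proof is correct and follows essentially the same route as the paper: reduce the $K$-path problem to an $(L/K)$-arm Bernoulli bandit with rewards scaled by $K$, apply the Lai--Robbins lower bound on the number of pulls of each suboptimal arm, and convert the KL divergence via $D_{\mathrm{KL}}(p \,\|\, q) \leq (p-q)^2/(q(1-q))$ to obtain $(L-K)K/(4\Delta)$. The only difference is that you spell out the simulation argument and the transfer of consistency explicitly, which the paper asserts in one line; the calculation is otherwise identical.
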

\begin{proof}
The proposition is proved as follows:
\begin{align*}
  \liminf_{n \to \infty} \frac{R(n)}{\log n}
  & \stackrel{\text{(a)}}{\geq} K \sum_{k = 2}^{L / K} \frac{\Delta / K}{D_\mathrm{KL}(0.5 - \Delta / K \,\|\, 0.5)}
  \nonumber \\
  & = \left(\frac{L}{K} - 1\right) \frac{\Delta}{D_\mathrm{KL}(0.5 - \Delta / K \,\|\, 0.5)} \nonumber \\
  & \stackrel{\text{(b)}}{\geq} \frac{(L - K) K}{4 \Delta}\,,
\end{align*}
where $D_\mathrm{KL}(p \,\|\, q)$ is the Kullback-Leibler (KL) divergence between two Bernoulli random variables with means $p$ and $q$. Inequality (a) follows from the fact that our problem is equivalent to a $(L / K)$-arm Bernoulli bandit whose returns are scaled by $K$. Therefore, we can bound the regret from below using an existing lower bound for Bernoulli bandits \cite{lai85asymptotically}. Inequality (b) is due to $D_\mathrm{KL}(p \,\|\, q) \leq \frac{(p - q)^2}{q (1 - q)}$, where $p = 0.5 - \Delta / K$ and $q = 0.5$.
\end{proof}

We also derive a gap-free lower bound.

\begin{proposition}
\label{prop:gap-free lower bound} For any $L$ and $K$ such that $L / K$ is an integer, and any horizon $n > 0$, there exists a $K$-path semi-bandit problem such that the regret of any algorithm is:
\begin{align*}
  R(n) \geq \frac{1}{20} \min(\sqrt{K L n}, Kn)\,.
\end{align*}
\end{proposition}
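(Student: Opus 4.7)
The plan is to reduce the $K$-path problem to a standard Bernoulli multi-armed bandit and then invoke a classical minimax lower bound for the reduced problem, as is done for the gap-dependent bound in Proposition~\ref{prop:lower bound}. Concretely, because the weights of items in a common path are always equal, observing \emph{any} item in the chosen path reveals the common weight of all items in that path. Thus, choosing path $j \in \{1, \dots, L/K\}$ at time $t$ is equivalent to drawing a single $X_{j,t} \sim \mathrm{Bernoulli}(\bar{w}_j)$, where $\bar{w}_1 = 0.5$ and $\bar{w}_j = 0.5 - \Delta/K$ for $j > 1$, and receiving a return of $K X_{j,t}$. So the $K$-path semi-bandit is identical (as a decision-theoretic problem) to an $M$-arm Bernoulli bandit with $M = L/K$ and rewards uniformly scaled by $K$; any semi-bandit algorithm induces a Bernoulli bandit algorithm and vice versa.

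Next I would invoke the classical minimax gap-free lower bound for Bernoulli bandits with rewards in $[0,1]$: for any algorithm, any $M \geq 2$, and any horizon $n \geq 1$, there exists an $M$-arm Bernoulli instance whose expected regret satisfies
\begin{equation*}
  R_{\mathrm{Ber}}(n) \;\geq\; \tfrac{1}{20}\,\min\!\left(\sqrt{M n},\,n\right).
\end{equation*}
This is the standard Auer--Cesa-Bianchi--Freund--Schapire bound (see also Bubeck and Cesa-Bianchi's monograph). The hard instance is constructed by assigning mean $1/2$ to one arm and mean $1/2 - \Delta_{\mathrm{Ber}}$ to the others, with $\Delta_{\mathrm{Ber}} = \Theta(\sqrt{M/n})$ in the regime $n \geq M$ and with $\Delta_{\mathrm{Ber}}$ close to (but bounded below) $1/2$ in the regime $n < M$. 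Critically, in both regimes the gap $\Delta_{\mathrm{Ber}}$ lies strictly in $(0, 1/2)$, so the worst-case instance can be embedded into our $K$-path family by setting $\Delta/K := \Delta_{\mathrm{Ber}}$, which respects the constraint $0 < \Delta/K < 1/2$.

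Finally, I would multiply by the reward scale $K$: since each semi-bandit return equals $K$ times the corresponding Bernoulli sample, the semi-bandit regret is $K$ times the Bernoulli regret on the reduced instance, giving
\begin{equation*}
  R(n) \;\geq\; \frac{K}{20}\,\min\!\left(\sqrt{(L/K)\,n},\,n\right) \;=\; \frac{1}{20}\,\min\!\left(\sqrt{K L n},\,K n\right),
\end{equation*}
which is the claim.

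The main obstacle is not the reduction, which is immediate from the perfectly-correlated-weights construction, but rather pinning down a form of the Bernoulli minimax bound that (i) is explicitly stated with the $\min(\sqrt{Mn}, n)$ shape and with the constant $1/20$, and (ii) uses hard instances whose gaps fit inside $(0, 1/2)$ so that they can be realized as $K$-path instances. The standard information-theoretic proof (Pinsker plus a two-point or uniform-prior reduction) gives exactly this, so the remaining work is essentially bookkeeping with constants and verifying the gap constraint in the small-$n$ regime; the scaling step then closes the proof.
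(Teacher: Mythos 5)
Your proposal is correct and follows essentially the same route as the paper: reduce the $K$-path problem to an $(L/K)$-arm Bernoulli bandit with payoffs scaled by $K$, invoke Theorem 5.1 of Auer et al.\ (the $\frac{1}{20}\min(\sqrt{Mn},n)$ minimax bound, whose hard instances are stochastic with gaps in $(0,1/2)$ and hence realizable as $K$-path instances), and multiply by the scale $K$. Your explicit check that the worst-case gap fits the constraint $0 < \Delta/K < 0.5$ is a nice touch that the paper leaves implicit.
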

\begin{proof}
The $K$-path semi-bandit problem is equivalent to a $(L / K)$-arm Bernoulli bandit whose payoffs are scaled by $K$. Therefore, we can apply Theorem 5.1 of Auer \etal~\cite{auer02nonstochastic} and bound the regret of any algorithm from below by:
\begin{align*}
  \frac{K}{20} \min(\sqrt{(L / K) n}, n) = \frac{1}{20} \min(\sqrt{K L n}, K n)\,.
\end{align*}
Note that the bound of Auer \etal~\cite{auer02nonstochastic} is for the adversarial setting. However, the worst-case environment in the proof is stochastic and therefore it applies to our problem.
\end{proof}


\section{Experiments}
\label{sec:experiments}

In this section, we evaluate $\combucb$ on a synthetic problem and demonstrate that its regret grows as suggested by our $O(K L (1 / \Delta) \log n)$ upper bound. We experiment with a stochastic longest-path problem on a $(m + 1) \times (m + 1)$ square grid (Figure~\ref{fig:main}b). The items in the ground set $E$ are the edges in the grid, $2 m (m + 1)$ in total. The feasible set $\Theta$ are all paths in the grid from the upper left corner to the bottom right corner that follow the directions of the edges. The length of these paths is $K = 2 m$. The weight of edge $e$ is drawn i.i.d. from a Bernoulli distribution with mean:
\begin{align*}
  \bar{w}(e) =
  \begin{cases}
    0.5 + \sigma / 2 & \text{$e$ is a leftmost or bottomost edge} \\
    0.5 - \sigma / 2 & \text{otherwise}\,,
  \end{cases}
\end{align*}
where $0 < \sigma < 1$. The optimal solution $A^\ast$ is a path along the leftmost and bottommost edges (Figure~\ref{fig:main}b).

The sample complexity of our problem is characterized by $|\tilde{E}| = 2 m (m + 1) - 2 m$ gaps $\Delta_{e, \min}$ ranging from $2 \sigma$ to $2 m \sigma$. It is easy to show that the number of items $e$ where $\Delta_{e, \min} = i \sigma$ is at most $2 (i - 1)$. Therefore, we can bound the $(\log n)$-term in Theorem~\ref{thm:K} as:
\begin{align}
  \sum_{e \in \tilde{E}} K \frac{534}{\Delta_{e, \min}} \log n
  & < 1068 m \sum_{i = 2}^{2 m} \frac{2 i}{i \sigma} \log n \nonumber \\
  & < \frac{4272 m^2 \log n}{\sigma}\,. \label{eq:grid regret bound}
\end{align}
Now we validate the dependence on $m$ and $\sigma$ empirically. We vary $m$ and $\sigma$, and run $\combucb$ for $n = 10^5$ steps.

Our experimental results are reported in Figure~\ref{fig:main}c. We observe two trends. First, the regret of $\combucb$ is linear in the number of items $L$, which depends quadratically on $m$ since $L = 2 m (m + 1)$. Second, the regret is linear in $1 / \sigma$. The dependence on $m$ and $1 / \sigma$ is the same as in our upper bound in \eqref{eq:grid regret bound}.


\section{Related Work}
\label{sec:related work}

Gai \etal~\cite{gai12combinatorial} proposed $\combucb$ and analyzed it. Chen \etal~\cite{chen13combinatorial} derived a $O(K^2 L (1 / \Delta) \log n)$ upper bound on the $n$-step regret of $\combucb$. In this paper, we show that the regret of $\combucb$ is $O(K L (1 / \Delta) \log n)$, a factor of $K$ improvement over the upper bound of Chen \etal~\cite{chen13combinatorial}. This upper bound is tight. We also prove a gap-free upper bound and show that it is nearly tight.

COMBAND \cite{cesabianchi12combinatorial}, online stochastic mirror descent (OSMD) \cite{audibert14regret}, and follow-the-perturbed-leader (FPL) with geometric resampling \cite{neu13efficient} are three recently proposed algorithms for adversarial combinatorial semi-bandits. In general, OSMD achieves optimal regret but is not guaranteed to be computationally efficient, in the same sense as in Section~\ref{sec:discussion}. FPL does not achieve optimal regret but is computationally efficient. It is an open problem whether adversarial combinatorial semi-bandits can be solved both computationally and sample efficiently. In this paper, we close this problem in the stochastic setting.

Matroid and polymatroid bandits \cite{kveton14matroid,kveton14learning} are instances of stochastic combinatorial semi-bandits. The $n$-step regret of $\combucb$ in these problems is $O(L (1 / \Delta) \log n)$, a factor of $K$ smaller than is suggested by our $O(K L (1 / \Delta) \log n)$ upper bound. However, we note that the bound of Kveton \etal~\cite{kveton14matroid,kveton14learning} is less general, as it applies only to matroids and polymatroids.

Our problem can be viewed as a linear bandit \cite{auer02using,abbasi-yadkori11improved}, where each solution $A$ is associated with an indicator vector $\bx \in \set{0, 1}^E$ and the learning agent observes the weight of each non-zero entry of $\bx$. This feedback model is clearly more informative than that in linear bandits, where the learning agent observes just the sum of the weights. Therefore, our learning problem has lower sample complexity. In particular, note that our $\Omega(\sqrt{K L n})$ lower bound (Proposition~\ref{prop:gap-free lower bound}) is $\sqrt{K}$ smaller than that of Audibert \etal~\cite{audibert14regret} (Theorem 5) for combinatorial linear bandits. The bound of Audibert \etal~\cite{audibert14regret} is proved for the adversarial setting. Nevertheless, it applies to our setting because the worst-case environment in the proof is stochastic.

Russo and Van Roy \cite{russo14information}, and Wen \etal~\cite{wen14efficient}, derived upper bounds on the \emph{Bayes regret} of \emph{Thompson sampling} in stochastic combinatorial semi-bandits. These bounds have a similar form as our gap-free upper bound in Theorem~\ref{thm:gap-free}. However, they differ from our work in two aspects. First, the Bayes regret is a different performance metric from regret. From the frequentist perspective, it is a much weaker metric. Second, we also derive $O(\log n)$ upper bounds.


\section{Extensions}
\label{sec:extensions}

The computational efficiency of $\combucb$ depends on the computational efficiency of the offline optimization oracle. When the oracle is inefficient, we suggest resorting to approximations. Let ${\tt ALG}$ be a computationally-efficient oracle that returns an approximation. Then $\combucb$ can be straightforwardly modified to call ${\tt ALG}$ instead of the original oracle. Moreover, it is easy to bound the regret of this algorithm if it is measured with respect to the best approximate solution by ${\tt ALG}$ in hindsight.

Thompson sampling \cite{thompson33likelihood} often performs better in practice than ${\tt UCB1}$ \cite{auer02finitetime}. It is straightforward to propose a variant of $\combucb$ that uses Thompson sampling, by replacing the UCBs in Algorithm~\ref{alg:ucb1} with sampling from the posterior on the mean of the weights. The frequentist analysis of regret in Thompson sampling \cite{agrawal12analysis} resembles the analysis of ${\tt UCB1}$. Therefore, we believe that our analysis can be generalized to Thompson sampling, and we hypothesize that the regret of the resulting algorithm is $O(K L (1 / \Delta) \log n)$.



\section{Conclusions}
\label{sec:conclusions}

The main contribution of this work is that we derive novel gap-dependent and gap-free upper bounds on the regret of $\combucb$, a UCB-like algorithm for stochastic combinatorial semi-bandits. These bounds are tight up to polylogarithmic factors. In other words, we show that $\combucb$ is sample efficient because it achieves near-optimal regret. It is well known that $\combucb$ is also computationally efficient \cite{gai12combinatorial}, it can be implemented efficiently whenever the offline variant of the problem can be solved computationally efficiently. Therefore, we indirectly show that stochastic combinatorial semi-bandits can be solved both computationally and sample efficiently, by $\combucb$.

Theorems~\ref{thm:K one gap} and \ref{thm:K} are proved quite generally, for any $(\alpha_i)$ and $(\beta_i)$ subject to relatively mild constraints. At the end of the proofs, we choose $(\alpha_i)$ and $(\beta_i)$ to be geometric sequences. This is sufficient for our purpose. But the choice is likely to be suboptimal and may lead to larger constants in our upper bounds than is necessary. We leave the problem of choosing better $(\alpha_i)$ and $(\beta_i)$ for future work.

We leave open several questions of interest. For instance, our $\Omega(K L (1 / \Delta) \log n)$ lower bound is derived on a problem where all suboptimal solutions have the same gaps. So technically speaking, our $O(K L (1 / \Delta) \log n)$ upper bound is tight only on this family of problems. It is an open problem whether our upper bound is tight in general.

Our $O(\sqrt{K L n \log n})$ upper bound matches the $\Omega(\sqrt{K L n})$ lower bound up to a factor of $\sqrt{\log n}$. We believe that this factor can be eliminated by modifying the confidence radii in $\combucb$ \eqref{eq:confidence radius} along the lines of Audibert \etal~\cite{audibert09minimax}. We leave this for future work.

\bibliographystyle{plain}
\bibliography{References}

\begin{thebibliography}{10}

\bibitem{abbasi-yadkori11improved}
Yasin Abbasi-Yadkori, David Pal, and Csaba Szepesvari.
\newblock Improved algorithms for linear stochastic bandits.
\newblock In {\em Advances in Neural Information Processing Systems 24}, pages
  2312--2320, 2011.

\bibitem{agrawal12analysis}
Shipra Agrawal and Navin Goyal.
\newblock Analysis of {Thompson} sampling for the multi-armed bandit problem.
\newblock In {\em Proceeding of the 25th Annual Conference on Learning Theory},
  pages 39.1--39.26, 2012.

\bibitem{audibert09minimax}
Jean-Yves Audibert and Sebastien Bubeck.
\newblock Minimax policies for adversarial and stochastic bandits.
\newblock In {\em Proceedings of the 22nd Annual Conference on Learning
  Theory}, 2009.

\bibitem{audibert14regret}
Jean-Yves Audibert, Sebastien Bubeck, and Gabor Lugosi.
\newblock Regret in online combinatorial optimization.
\newblock {\em Mathematics of Operations Research}, 39(1):31--45, 2014.

\bibitem{auer02using}
Peter Auer.
\newblock Using confidence bounds for exploitation-exploration trade-offs.
\newblock {\em Journal of Machine Learning Research}, 3:397--422, 2002.

\bibitem{auer02finitetime}
Peter Auer, Nicolo Cesa-Bianchi, and Paul Fischer.
\newblock Finite-time analysis of the multiarmed bandit problem.
\newblock {\em Machine Learning}, 47:235--256, 2002.

\bibitem{auer02nonstochastic}
Peter Auer, Nicolo Cesa-Bianchi, Yoav Freund, and Robert Schapire.
\newblock The nonstochastic multiarmed bandit problem.
\newblock {\em {SIAM} Journal on Computing}, 32(1):48--77, 2002.

\bibitem{bubeck12regret}
Sebastien Bubeck and Nicolo Cesa-Bianchi.
\newblock Regret analysis of stochastic and nonstochastic multi-armed bandit
  problems.
\newblock {\em Foundations and Trends in Machine Learning}, 2012.

\bibitem{cesabianchi12combinatorial}
Nicolo Cesa-Bianchi and Gabor Lugosi.
\newblock Combinatorial bandits.
\newblock {\em Journal of Computer and System Sciences}, 78(5):1404--1422,
  2012.

\bibitem{chen13combinatorial}
Wei Chen, Yajun Wang, and Yang Yuan.
\newblock Combinatorial multi-armed bandit: General framework, results and
  applications.
\newblock In {\em Proceedings of the 30th International Conference on Machine
  Learning}, pages 151--159, 2013.

\bibitem{gai12combinatorial}
Yi~Gai, Bhaskar Krishnamachari, and Rahul Jain.
\newblock Combinatorial network optimization with unknown variables:
  Multi-armed bandits with linear rewards and individual observations.
\newblock {\em IEEE/ACM Transactions on Networking}, 20(5):1466--1478, 2012.

\bibitem{kveton14matroid}
Branislav Kveton, Zheng Wen, Azin Ashkan, Hoda Eydgahi, and Brian Eriksson.
\newblock Matroid bandits: Fast combinatorial optimization with learning.
\newblock In {\em Proceedings of the 30th Conference on Uncertainty in
  Artificial Intelligence}, pages 420--429, 2014.

\bibitem{kveton14learning}
Branislav Kveton, Zheng Wen, Azin Ashkan, and Michal Valko.
\newblock Learning to act greedily: Polymatroid semi-bandits.
\newblock {\em CoRR}, abs/1405.7752, 2014.

\bibitem{lai85asymptotically}
T.~L. Lai and Herbert Robbins.
\newblock Asymptotically efficient adaptive allocation rules.
\newblock {\em Advances in Applied Mathematics}, 6(1):4--22, 1985.

\bibitem{neu13efficient}
Gergely Neu and Gabor Bartok.
\newblock An efficient algorithm for learning with semi-bandit feedback.
\newblock In {\em 24th International Conference on Algorithmic Learning
  Theory}, volume 8139 of {\em Lecture Notes in Computer Science}, pages
  234--248. 2013.

\bibitem{papadimitriou98combinatorial}
Christos Papadimitriou and Kenneth Steiglitz.
\newblock {\em Combinatorial Optimization}.
\newblock Dover Publications, Mineola, NY, 1998.

\bibitem{russo14information}
Daniel Russo and Benjamin~Van Roy.
\newblock An information-theoretic analysis of {Thompson} sampling.
\newblock {\em CoRR}, abs/1403.5341, 2014.

\bibitem{thompson33likelihood}
William.~R. Thompson.
\newblock On the likelihood that one unknown probability exceeds another in
  view of the evidence of two samples.
\newblock {\em Biometrika}, 25(3-4):285--294, 1933.

\bibitem{wen14efficient}
Zheng Wen, Azin Ashkan, Hoda Eydgahi, and Branislav Kveton.
\newblock Efficient learning in large-scale combinatorial semi-bandits.
\newblock {\em CoRR}, abs/1406.7443, 2014.

\end{thebibliography}


\clearpage
\onecolumn
\appendix

\section{Proofs of Main Theorems}
\label{sec:proofs}

\subsection{Proof of Lemma~\ref{lem:regret decomposition}}
\label{sec:proof regret decomposition}

Let $R_t = R(A_t, w_t)$ be the stochastic regret of $\combucb$ at time $t$, where $A_t$ and $w_t$ are the solution and the weights of the items at time $t$, respectively. Furthermore, let $\cE_t = \set{\exists e \in E: \abs{\bar{w}(e) - \hat{w}_{T_{t - 1}(e)}(e)} \geq c_{t - 1, T_{t - 1}(e)}}$ be the event that $\bar{w}(e)$ is outside of the high-probability confidence interval around $\hat{w}_{T_{t - 1}(e)}(e)$ for some item $e$ at time $t$; and let $\ccE_t$ be the complement of $\cE_t$, $\bar{w}(e)$ is in the high-probability confidence interval around $\hat{w}_{T_{t - 1}(e)}(e)$ for all $e$ at time $t$. Then we can decompose the regret of $\combucb$ as:
\begin{align*}
  R(n) = \EE{\sum_{t = 1}^{t_0 - 1} R_t} +
  \EE{\sum_{t = t_0}^n \I{\cE_t} R_t} +
  \EE{\sum_{t = t_0}^n \I{\ccE_t} R_t}\,.
\end{align*}
Now we bound each term in our regret decomposition.

The regret of the initialization, $\EE{\sum_{t = 1}^{t_0 - 1} R_t}$, is bounded by $K L$ because Algorithm~\ref{alg:initialization} terminates in at most $L$ steps, and $R_t \leq K$ for any $A_t$ and $w_t$.

The second term in our regret decomposition, $\EE{\sum_{t = t_0}^n \I{\cE_t} R_t}$, is small because all of our confidence intervals hold with high probability. In particular, for any $e$, $s$, and $t$:
\begin{align*}
  P(\abs{\bar{w}(e) - \hat{w}_s(e)} \geq c_{t, s}) \leq 2 \exp[-3 \log t]\,,
\end{align*}
and therefore:
\begin{align*}
  \EE{\sum_{t = t_0}^n \I{\cE_t}} \leq
  \sum_{e \in E} \sum_{t = 1}^n \sum_{s = 1}^t P(\abs{\bar{w}(e) - \hat{w}_s(e)} \geq c_{t, s}) \leq
  2 \sum_{e \in E} \sum_{t = 1}^n \sum_{s = 1}^t \exp[-3 \log t] \leq
  2 \sum_{e \in E} \sum_{t = 1}^n t^{-2} \leq
  \frac{\pi^2}{3} L\,.
\end{align*}
Since $R_t \leq K$ for any $A_t$ and $w_t$, $\EE{\sum_{t = t_0}^n \I{\cE_t} R_t} \leq \frac{\pi^2}{3} K L$.

Finally, we rewrite the last term in our regret decomposition as:
\begin{align*}
  \EE{\sum_{t = t_0}^n \I{\ccE_t} R_t} \stackrel{\text{(a)}}{=}
  \sum_{t = t_0}^n \EE{\I{\ccE_t} \EE{R_t \,|\, A_t}} \stackrel{\text{(b)}}{=}
  \EE{\sum_{t = t_0}^n \Delta_{A_t} \I{\ccE_t, \Delta_{A_t} > 0}}\,.
\end{align*}
In equality (a), the outer expectation is over the history of the agent up to time $t$, which in turn determines $A_t$ and $\ccE_t$; and $\EE{R_t \,|\, A_t}$ is the expected regret at time $t$ conditioned on solution $A_t$. Equality (b) follows from $\Delta_{A_t} = \EE{R_t \,|\, A_t}$. Now we bound $\Delta_{A_t} \I{\ccE_t, \Delta_{A_t} > 0}$ for any suboptimal $A_t$. The bound is derived based on two facts. First, when $\combucb$ chooses $A_t$, $f(A_t, U_t) \geq f(A^\ast, U_t)$. This further implies that $\sum_{e \in A_t \setminus A^\ast} U_t(e) \geq \sum_{e \in A^\ast \setminus A_t} U_t(e)$. Second, when event $\ccE_t$ happens, $\abs{\bar{w}(e) - \hat{w}_{T_{t - 1}(e)}(e)} < c_{t - 1, T_{t - 1}(e)}$ for all items $e$. Therefore:
\begin{align*}
  \sum_{e \in A_t \setminus A^\ast} \hspace{-0.1in} \bar{w}(e) +
  2 \hspace{-0.1in} \sum_{e \in A_t \setminus A^\ast} \hspace{-0.1in} c_{t - 1, T_{t - 1}(e)} \geq
  \sum_{e \in A_t \setminus A^\ast} \hspace{-0.1in} U_t(e) \geq
  \sum_{e \in A^\ast \setminus A_t} \hspace{-0.1in} U_t(e) \geq
  \sum_{e \in A^\ast \setminus A_t} \hspace{-0.1in} \bar{w}(e)\,,
\end{align*}
and $2 \sum_{e \in A_t \setminus A^\ast} c_{t - 1, T_{t - 1}(e)} \geq \Delta_{A_t}$ follows from the observation
that $\Delta_{A_t} =  \sum_{e \in A^\ast \setminus A_t} \bar{w}(e) - \sum_{e \in A_t \setminus A^\ast} \bar{w}(e)$. Now note that $c_{n, T_{t - 1}(e)} \geq c_{t - 1, T_{t - 1}(e)}$ for any time $t \leq n$. Therefore, the event $\cF_t$ in \eqref{eq:suboptimality event} must happen and:
\begin{align*}
  \EE{\sum_{t = t_0}^n \Delta_{A_t} \I{\ccE_t, \Delta_{A_t} > 0}} \leq
  \EE{\sum_{t = t_0}^n \Delta_{A_t} \I{\cF_t}}\,.
\end{align*}
This concludes our proof.

\subsection{Proof of Theorem~\ref{thm:K43 one gap}}
\label{sec:proof K43 one gap}

By Lemma~\ref{lem:regret decomposition}, it remains to bound $\hat{R}(n) = \sum_{t = t_0}^n \Delta_{A_t} \I{\cF_t}$, where the event $\cF_t$ is defined in \eqref{eq:suboptimality event}. By Lemma~\ref{lem:K43 events} and from the assumption that $\Delta_{A_t} = \Delta$ for all suboptimal $A_t$, it follows that:
\begin{align*}
  \hat{R}(n) =
  \Delta \sum_{t = t_0}^n \I{\cF_t} =
  \Delta \sum_{t = t_0}^n \I{G_{1, t}, \Delta_{A_t} > 0} +
  \Delta \sum_{t = t_0}^n \I{G_{2, t}, \Delta_{A_t} > 0}\,.
\end{align*}
To bound the above quantity, it is sufficient to bound the number of times that events $G_{1, t}$ and $G_{2, t}$ happen. Then we set the tunable parameters $d$ and $\alpha$ such that the two counts are of the same magnitude.

\begin{claim}
\label{claim:event 1} Event $G_{1, t}$ happens at most $\displaystyle \frac{\alpha}{d} K^2 L \frac{6}{\Delta^2} \log n$ times.
\end{claim}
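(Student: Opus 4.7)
The plan is to prove Claim~\ref{claim:event 1} by a direct counting (pigeonhole) argument on the observation counters $T_t(e)$. Let $m = \alpha K^2 \frac{6}{\Delta^2} \log n$, so that event $G_{1,t}$ says that at least $d$ items of $\tilde{A}_t$ satisfy $T_{t-1}(e) \leq m$. Crucially, $\tilde{A}_t \subseteq A_t$, so every item $e \in \tilde{A}_t$ is observed at step $t$ and has its counter incremented: $T_t(e) = T_{t-1}(e) + 1$. Thus each occurrence of $G_{1,t}$ produces at least $d$ items whose counter is at most $m$ immediately before being incremented.

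For each item $e \in E$, let me count how many steps $t$ can have $T_{t-1}(e) \leq m$ \emph{and} $e \in A_t$. Since each such step increases $T_t(e)$ by $1$ starting from some value in $\{1,\dots,m\}$, this can happen at most $m$ times per item (the counter strictly grows past $m$ after $m$ such increments, and then the condition $T_{t-1}(e) \leq m$ fails). Summing over all $L$ items yields a global budget: the total number of (step, item) pairs $(t,e)$ with $e \in \tilde{A}_t$ and $T_{t-1}(e) \leq m$ is at most $mL$.

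Now I would double-count. Each step $t$ on which $G_{1,t}$ occurs contributes at least $d$ pairs $(t,e)$ of the above type, by the very definition of the event. Therefore the number of steps on which $G_{1,t}$ holds is at most $mL/d = \frac{\alpha}{d} K^2 L \frac{6}{\Delta^2} \log n$, which is exactly the bound claimed.

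I do not expect a genuine obstacle: the argument is a clean amortization, and the only subtlety worth stating carefully is that $\tilde{A}_t \subseteq A_t$ guarantees the counters of the $d$ witnessing items actually get incremented (so each item exhausts its budget of $m$ under-observed appearances at most once). The identical calculation will presumably be reused in Section~\ref{sec:K upper bounds} for the cascade events $G_{i,t}$, with $m$ replaced by $m_{i,t}$ and $d$ replaced by $\beta_i K$, yielding the $\alpha_i/\beta_i$ factor that appears in Theorem~\ref{thm:K one gap}.
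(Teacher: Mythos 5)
Your argument is correct and is essentially the paper's own proof: both amortize over the observation counters, noting that each occurrence of $G_{1,t}$ increments at least $d$ counters that are still at most $\alpha K^2 \frac{6}{\Delta^2}\log n$, and that each of the at most $L$ items can contribute at most that many such increments. Your double-counting phrasing is just a slightly more explicit rendering of the same pigeonhole step the paper states informally.
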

\begin{proof}
Recall that event $G_{1, t}$ can happen only if at least $d$ chosen suboptimal items are not observed ``sufficiently often'' up to time $t$, $T_{t - 1}(e) \leq \alpha K^2 \frac{6}{\Delta^2} \log n$ for at least $d$ items in $\tilde{A}_t$. After the event happens, the observation counters of these items increase by one. Therefore, after the event happens $\frac{\alpha}{d} K^2 L \frac{6}{\Delta^2} \log n$ times, all suboptimal items are guaranteed to be observed at least $\alpha K^2 \frac{6}{\Delta^2} \log n$ times and $G_{1, t}$ cannot happen anymore.
\end{proof}

\begin{claim}
\label{claim:event 2} Event $G_{2, t}$ happens at most $\displaystyle \frac{\alpha d^2}{(\sqrt{\alpha} - 1)^2} L \frac{6}{\Delta^2} \log n$ times.
\end{claim}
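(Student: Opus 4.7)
The plan is to mimic the counting argument used for Claim~\ref{claim:event 1}, but now track a single ``witness'' item per occurrence of $G_{2,t}$. By the second condition in the definition of $G_{2,t}$ (and using $\Delta_{A_t} = \Delta$), whenever $G_{2,t}$ happens there exists at least one item $e \in \tilde{A}_t$ with $T_{t-1}(e) \leq \frac{\alpha d^2}{(\sqrt{\alpha}-1)^2} \frac{6}{\Delta^2} \log n$. For each $t$ at which $G_{2,t}$ happens, I would pick any such $e_t \in \tilde{A}_t$ (say, the one of smallest index) and call it the witness of the event.

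Next I would observe that the witness $e_t$ lies in $\tilde{A}_t \subseteq A_t$, so it is observed at step $t$ and its counter strictly increases: $T_t(e_t) = T_{t-1}(e_t) + 1$. Therefore, for any fixed item $e \in E$, the number of time steps $t \leq n$ at which $e$ can serve as the witness of $G_{2,t}$ is at most $\frac{\alpha d^2}{(\sqrt{\alpha}-1)^2} \frac{6}{\Delta^2} \log n$: once its observation counter exceeds this threshold, it no longer satisfies the witness condition, and the counter can only grow over time.

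Finally, I would sum this per-item bound over all $e \in E$. Since each occurrence of $G_{2,t}$ is charged to exactly one witness, the total number of times $G_{2,t}$ can happen is at most
\begin{align*}
  L \cdot \frac{\alpha d^2}{(\sqrt{\alpha}-1)^2} \frac{6}{\Delta^2} \log n,
\end{align*}
which is the desired bound. I do not anticipate a real obstacle here: the only subtlety is being explicit that the ``witness'' is a single designated item per event so that the charging argument does not double-count, and that a witness being observed actually increments its own counter (which holds because the witness is in $\tilde{A}_t \subseteq A_t$, and counters are updated for every $e \in A_t$).
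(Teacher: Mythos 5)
Your proposal is correct and matches the paper's own argument for Claim~\ref{claim:event 2}: the paper likewise notes that each occurrence of $G_{2,t}$ requires some $e \in \tilde{A}_t$ with $T_{t-1}(e) \leq \frac{\alpha d^2}{(\sqrt{\alpha}-1)^2}\frac{6}{\Delta^2}\log n$, whose counter is then incremented, and sums the per-item bound over the $L$ items. Your explicit single-witness charging is just a slightly more careful writeup of the same counting argument.
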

\begin{proof}
Event $G_{2, t}$ can happen only if there exists $e \in \tilde{A}_t$ such that $T_{t - 1}(e) \leq \frac{\alpha d^2}{(\sqrt{\alpha} - 1)^2} \frac{6}{\Delta^2} \log n$. After the event happens, the observation counter of item $e$ increases by one. Therefore, the number of times that event $G_{2, t}$ can happen is bounded trivially by $\frac{\alpha d^2}{(\sqrt{\alpha} - 1)^2} L \frac{6}{\Delta^2} \log n$.
\end{proof}

\noindent Based on Claims~\ref{claim:event 1} and \ref{claim:event 2}, $\hat{R}(n)$ is bounded as:
\begin{align*}
  \hat{R}(n) \leq
  \left(\frac{\alpha}{d} K^2 + \frac{\alpha d^2}{(\sqrt{\alpha} - 1)^2}\right) L \frac{6}{\Delta} \log n\,.
\end{align*}
Finally, we choose $\alpha = 4$ and $d = K^\frac{2}{3}$; and it follows that the regret is bounded as:
\begin{align*}
  R(n) \leq
  \EE{\hat{R}(n)} + \left(\frac{\pi^2}{3} + 1\right) K L \leq
  K^\frac{4}{3} L \frac{48}{\Delta} \log n + \left(\frac{\pi^2}{3} + 1\right) K L\,.
\end{align*}

\subsection{Proof of Theorem~\ref{thm:K43}}
\label{sec:proof K43}

Let $\cF_t$ be the event in \eqref{eq:suboptimality event}. By Lemmas~\ref{lem:regret decomposition} and \ref{lem:K43 events}, it remains to bound:
\begin{align*}
  \hat{R}(n) =
  \sum_{t = t_0}^n \Delta_{A_t} \I{\cF_t} =
  \sum_{t = t_0}^n \Delta_{A_t} \I{G_{1, t}, \Delta_{A_t} > 0} +
  \sum_{t = t_0}^n \Delta_{A_t} \I{G_{2, t}, \Delta_{A_t} > 0}\,.
\end{align*}
In the next step, we introduce item-specific variants of events $G_{1, t}$ \eqref{eq:event 1} and $G_{2, t}$ \eqref{eq:event 2}, and then associate the regret at time $t$ with these events. In particular, let:
\begin{align}
  G_{e, 1, t} & = G_{1, t} \cap \left\{e \in \tilde{A}_t,
  T_{t - 1}(e) \leq \alpha K^2 \frac{6}{\Delta_{A_t}^2} \log n\right\} \\
  G_{e, 2, t} & = G_{2, t} \cap \left\{e \in \tilde{A}_t,
  T_{t - 1}(e) \leq \frac{\alpha d^2}{(\sqrt{\alpha} - 1)^2} \frac{6}{\Delta_{A_t}^2} \log n\right\}
\end{align}
be the events that item $e$ is not observed ``sufficiently often'' under events $G_{1, t}$ and $G_{2, t}$, respectively. Then by the definitions of the above events, it follows that:
\begin{align*}
  \I{G_{1, t}, \Delta_{A_t} > 0} & \leq
  \frac{1}{d} \sum_{e \in \tilde{E}} \I{G_{e, 1, t}, \Delta_{A_t} > 0} \\
  \I{G_{2, t}, \Delta_{A_t} > 0} &
  \leq \sum_{e \in \tilde{E}} \I{G_{e, 2, t}, \Delta_{A_t} > 0}\,,
\end{align*}
where $\tilde{E} = E \setminus A^\ast$ is the set of subptimal items; and we bound $\hat{R}(n)$ as:
\begin{align*}
  \hat{R}(n) \leq
  \sum_{e \in \tilde{E}} \sum_{t = t_0}^n
  \I{G_{e, 1, t}, \Delta_{A_t} > 0} \frac{\Delta_{A_t}}{d} +
  \sum_{e \in \tilde{E}} \sum_{t = t_0}^n
  \I{G_{e, 2, t}, \Delta_{A_t} > 0} \Delta_{A_t}\,.
\end{align*}
Let each item $e$ be contained in $N_e$ suboptimal solutions and $\Delta_{e, 1} \geq \ldots \geq \Delta_{e, N_e}$ be the gaps of these solutions, ordered from the largest gap to the smallest one. Then $\hat{R}(n)$ can be further bounded as:
\begin{align*}
  \hat{R}(n)
  & \leq \sum_{e \in \tilde{E}} \sum_{t = t_0}^n \sum_{k = 1}^{N_e}
  \I{G_{e, 1, t}, \Delta_{A_t} = \Delta_{e, k}} \frac{\Delta_{e, k}}{d} +
  \sum_{e \in \tilde{E}} \sum_{t = t_0}^n \sum_{k = 1}^{N_e}
  \I{G_{e, 2, t}, \Delta_{A_t} = \Delta_{e, k}} \Delta_{e, k} \\
  & \stackrel{\text{(a)}}{\leq} \sum_{e \in \tilde{E}} \sum_{t = t_0}^n \sum_{k = 1}^{N_e}
  \I{e \in \tilde{A}_t, T_{t - 1}(e) \leq \alpha K^2 \frac{6}{\Delta_{e, k}^2} \log n,
  \Delta_{A_t} = \Delta_{e, k}} \frac{\Delta_{e, k}}{d} + {} \\
  & \quad\, \sum_{e \in \tilde{E}} \sum_{t = t_0}^n \sum_{k = 1}^{N_e}
  \I{e \in \tilde{A}_t, T_{t - 1}(e) \leq \frac{\alpha d^2}{(\sqrt{\alpha} - 1)^2} \frac{6}{\Delta_{e, k}^2} \log n,
  \Delta_{A_t} = \Delta_{e, k}} \Delta_{e, k} \\
  & \stackrel{\text{(b)}}{\leq} \sum_{e \in \tilde{E}}
  \frac{6 \alpha K^2 \log n}{d}
  \left[\Delta_{e, 1} \frac{1}{\Delta_{e, 1}^2} + \sum_{k = 2}^{N_e} \Delta_{e, k}
  \left(\frac{1}{\Delta_{e, k}^2} - \frac{1}{\Delta_{e, k - 1}^2}\right)\right] + {} \\
  & \quad\, \sum_{e \in \tilde{E}}
  \frac{6 \alpha d^2 \log n}{(\sqrt{\alpha} - 1)^2}
  \left[\Delta_{e, 1} \frac{1}{\Delta_{e, 1}^2} + \sum_{k = 2}^{N_e} \Delta_{e, k}
  \left(\frac{1}{\Delta_{e, k}^2} - \frac{1}{\Delta_{e, k - 1}^2}\right)\right] \\
  & \stackrel{\text{(c)}}{<} \sum_{e \in \tilde{E}}
  \left(\frac{\alpha}{d} K^2 + \frac{\alpha d^2}{(\sqrt{\alpha} - 1)^2}\right)
  \frac{12}{\Delta_{e, \min}} \log n\,,
\end{align*}
where inequality (a) is by the definitions of events $G_{e, 1, t}$ and $G_{e, 2, t}$, inequality (b) is from the solution to:
\begin{align*}
  \max_{A_1, \dots, A_n} \sum_{t = t_0}^n \sum_{k = 1}^{N_e}
  \I{e \in \tilde{A}_t, T_{t - 1}(e) \leq \frac{C}{\Delta_{e, k}^2} \log n,
  \Delta_{A_t} = \Delta_{e, k}} \Delta_{e, k}
\end{align*}
for appropriate $C$, and inequality (c) follows from Lemma 3 of Kveton \etal~\cite{kveton14matroid}:
\begin{align}
  \left[\Delta_{e, 1} \frac{1}{\Delta_{e, 1}^2} + \sum_{k = 2}^{N_e} \Delta_{e, k}
  \left(\frac{1}{\Delta_{e, k}^2} - \frac{1}{\Delta_{e, k - 1}^2}\right)\right] <
  \frac{2}{\Delta_{e, N_e}} =
  \frac{2}{\Delta_{e, \min}}\,.
  \label{eq:kveton2014}
\end{align}
Finally, we choose $\alpha = 4$ and $d = K^\frac{2}{3}$; and it follows that the regret is bounded as:
\begin{align*}
  R(n) \leq
  \EE{\hat{R}(n)} + \left(\frac{\pi^2}{3} + 1\right) K L \leq
  \sum_{e \in \tilde{E}} K^\frac{4}{3} \frac{96}{\Delta_{e, \min}} \log n +
  \left(\frac{\pi^2}{3} + 1\right) K L\,.
\end{align*}

\subsection{Proof of Theorem~\ref{thm:K one gap}}
\label{sec:proof K one gap}

The first step of the proof is identical to that of Theorem~\ref{thm:K43 one gap}. By Lemma~\ref{lem:regret decomposition}, it remains to bound $\hat{R}(n) = \sum_{t = t_0}^n \Delta_{A_t} \I{\cF_t}$, where the event $\cF_t$ is defined in \eqref{eq:suboptimality event}. By Lemma~\ref{lem:complete system} and from the assumption that $\Delta_{A_t} = \Delta$ for all suboptimal $A_t$, it follows that:
\begin{align*}
  \hat{R}(n) =
  \Delta \sum_{t = t_0}^n \I{\cF_t} =
  \Delta \sum_{i = 1}^\infty \sum_{t = t_0}^n \I{G_{i, t}, \Delta_{A_t} > 0}\,.
\end{align*}
Note that $\Delta_{A_t} > 0$ implies $\Delta_{A_t} = \Delta$. Therefore, $m_{i, t}$ does not depend on $t$ and we denote it by $m_i = \alpha_i \frac{K^2}{\Delta^2} \log n$. Based on the same argument as in Claim~\ref{claim:event 1}, event $G_{i, t}$ cannot happen more than $\frac{L m_i}{\beta_i K}$ times, because at least $\beta_i K$ items that are observed at most $m_i$ times have their observation counters incremented in each event $G_{i, t}$. Therefore:
\begin{align}
  \hat{R}(n) \leq
  \Delta \sum_{i = 1}^\infty \frac{L m_i}{\beta_i K} =
  K L \frac{1}{\Delta} \left[\sum_{i = 1}^\infty \frac{\alpha_i}{\beta_i}\right] \log n\,.
  \label{eq:ab regret bound}
\end{align}
It remains to choose $(\alpha_i)$ and $(\beta_i)$ such that:
\begin{itemize}
  \item $\lim_{i \to \infty} \alpha_i = \lim_{i \to \infty} \beta_i = 0$;
  \item Monotonicity conditions in \eqref{eq:condition 1} and \eqref{eq:condition 2} hold;
  \item Inequality~\eqref{eq:condition 3} holds,
  $\sqrt{6} \sum_{i = 1}^\infty \frac{\beta_{i - 1} - \beta_i}{\sqrt{\alpha_i}} \leq 1$;
  \item $\sum_{i = 1}^\infty \frac{\alpha_i}{\beta_i}$ is minimized.
\end{itemize}
We choose $(\alpha_i)$ and $(\beta_i)$ to be geometric sequences, $\beta_i = \beta^i$ and $\alpha_i = d \alpha^i$ for $0 < \alpha, \beta < 1$ and $d > 0$. For this setting, $\alpha_i \to 0$ and $\beta_i \to 0$, and the monotonicity conditions are also satisfied. Moreover, if $\beta < \sqrt{\alpha}$, we have:
\begin{align*}
  \sqrt{6} \sum_{i = 1}^\infty \frac{\beta_{i - 1} - \beta_i}{\sqrt{\alpha_i}} =
  \sqrt{6} \sum_{i = 1}^\infty \frac{\beta^{i - 1} - \beta^i}{\sqrt{d \alpha^i}} =
  \sqrt{\frac{6}{d}} \frac{1 - \beta}{\sqrt{\alpha} - \beta} \leq
  1
\end{align*}
provided that $d \geq 6 \left(\frac{1 - \beta}{\sqrt{\alpha} - \beta}\right)^2$. Furthermore, if $\alpha < \beta$, we have:
\begin{align*}
  \sum_{i = 1}^\infty \frac{\alpha_i}{\beta_i} =
  \sum_{i = 1}^\infty \frac{d \alpha^i}{\beta^i} = 
  \frac{d \alpha}{\beta - \alpha}\,.
\end{align*}
Given the above, the best choice of $d$ is $6 \left(\frac{1 - \beta}{\sqrt{\alpha} - \beta}\right)^2$ and the problem of minimizing the constant in our regret bound can be written as:
\begin{align*}
  \inf_{\alpha, \beta} & \quad 6 \left(\frac{1 - \beta}{\sqrt{\alpha} - \beta}\right)^2
  \frac{\alpha}{\beta - \alpha} \\
  \textrm{s.t.} & \quad 0 < \alpha < \beta < \sqrt{\alpha} < 1\,.
\end{align*}
We find the solution to the above problem numerically, and determine it to be $\alpha = 0.1459$ and $\beta = 0.2360$. For these $\alpha$ and $\beta$, $6 \left(\frac{1 - \beta}{\sqrt{\alpha} - \beta}\right)^2 \frac{\alpha}{\beta - \alpha} < 267$. We apply this upper bound to \eqref{eq:ab regret bound} and it follows that the regret is bounded as:
\begin{align*}
  R(n) \leq
  \EE{\hat{R}(n)} + \left(\frac{\pi^2}{3} + 1\right) K L \leq
  K L \frac{267}{\Delta} \log n + \left(\frac{\pi^2}{3} + 1\right) K L\,.
\end{align*}

\subsection{Proof of Theorem~\ref{thm:K}}
\label{sec:proof K}

Let $\cF_t$ be the event in \eqref{eq:suboptimality event}. By Lemmas~\ref{lem:regret decomposition} and \ref{lem:complete system}, it remains to bound:
\begin{align*}
  \hat{R}(n) =
  \sum_{t = t_0}^n \Delta_{A_t} \I{\cF_t} =
  \sum_{i = 1}^\infty \sum_{t = t_0}^n \Delta_{A_t} \I{G_{i, t}, \Delta_{A_t} > 0}\,.
\end{align*}
In the next step, we define item-specific variants of events $G_{i, t}$ \eqref{eq:event i} and associate the regret at time $t$ with these events. In particular, let:
\begin{align}
  G_{e, i, t} = G_{i, t} \cap \left\{e \in \tilde{A}_t, T_{t - 1}(e) \leq m_{i, t}\right\}
\end{align}
be the event that item $e$ is not observed ``sufficiently often'' under event $G_{i, t}$. Then it follows that:
\begin{align*}
  \I{G_{i, t}, \Delta_{A_t} > 0} \leq
  \frac{1}{\beta_i K} \sum_{e \in \tilde{E}} \I{G_{e, i, t}, \Delta_{A_t} > 0}\,,
\end{align*}
because at least $\beta_i K$ items are not observed ``sufficiently often'' under event $G_{i, t}$. Therefore, we can bound $\hat{R}(n)$ as:
\begin{align*}
  \hat{R}(n) \leq
  \sum_{e \in \tilde{E}} \sum_{i = 1}^\infty \sum_{t = t_0}^n
  \I{G_{e, i, t}, \Delta_{A_t} > 0} \frac{\Delta_{A_t}}{\beta_i K}\,.
\end{align*}
Let each item $e$ be contained in $N_e$ suboptimal solutions and $\Delta_{e, 1} \geq \ldots \geq \Delta_{e, N_e}$ be the gaps of these solutions, ordered from the largest gap to the smallest one. Then $\hat{R}(n)$ can be further bounded as:
\begin{align*}
  \hat{R}(n)
  & \leq \sum_{e \in \tilde{E}} \sum_{i = 1}^\infty \sum_{t = t_0}^n \sum_{k = 1}^{N_e}
  \I{G_{e, i, t}, \Delta_{A_t} = \Delta_{e, k}} \frac{\Delta_{e, k}}{\beta_i K} \\
  & \stackrel{\text{(a)}}{\leq} \sum_{e \in \tilde{E}} \sum_{i = 1}^\infty \sum_{t = t_0}^n \sum_{k = 1}^{N_e}
  \I{e \in \tilde{A}_t, T_{t - 1}(e) \leq \alpha_i \frac{K^2}{\Delta_{e, k}^2} \log n,
  \Delta_{A_t} = \Delta_{e, k}} \frac{\Delta_{e, k}}{\beta_i K} \\
  & \stackrel{\text{(b)}}{\leq} \sum_{e \in \tilde{E}} \sum_{i = 1}^\infty
  \frac{\alpha_i K \log n}{\beta_i}
  \left[\Delta_{e, 1} \frac{1}{\Delta_{e, 1}^2} + \sum_{k = 2}^{N_e} \Delta_{e, k}
  \left(\frac{1}{\Delta_{e, k}^2} - \frac{1}{\Delta_{e, k - 1}^2}\right)\right] \\
  & \stackrel{\text{(c)}}{<} \sum_{e \in \tilde{E}} \sum_{i = 1}^\infty
  \frac{\alpha_i K \log n}{\beta_i} \frac{2}{\Delta_{e, \min}} \\
  & = \sum_{e \in \tilde{E}} K \frac{2}{\Delta_{e, \min}}
  \left[\sum_{i = 1}^\infty \frac{\alpha_i}{\beta_i}\right] \log n\,,
\end{align*}
where inequality (a) is by the definition of event $G_{e, i, t}$, inequality (b) follows from the solution to:
\begin{align*}
  \max_{A_1, \dots, A_n} \sum_{t = t_0}^n \sum_{k = 1}^{N_e}
  \I{e \in \tilde{A}_t, T_{t - 1}(e) \leq \alpha_i \frac{K^2}{\Delta_{e, k}^2} \log n,
  \Delta_{A_t} = \Delta_{e, k}} \frac{\Delta_{e, k}}{\beta_i K}\,,
\end{align*}
and inequality (c) follows from \eqref{eq:kveton2014}. For the same $(\alpha_i)$ and $(\beta_i)$ as in Theorem~\ref{thm:K one gap}, we have $\sum_{i = 1}^\infty \frac{\alpha_i}{\beta_i} < 267$ and it follows that the regret is bounded as:
\begin{align*}
  R(n) \leq
  \EE{\hat{R}(n)} + \left(\frac{\pi^2}{3} + 1\right) K L \leq
  \sum_{e \in \tilde{E}} K \frac{534}{\Delta_{e, \min}} \log n +
  \left(\frac{\pi^2}{3} + 1\right) K L\,.
\end{align*}

\subsection{Proof of Theorem~\ref{thm:gap-free}}
\label{sec:proof gap-free}

The key idea is to decompose the regret of $\combucb$ into two parts, where the gaps are larger than $\epsilon$ and at most $\epsilon$. We analyze each part separately and then set $\epsilon$ to get the desired result.

By Lemma~\ref{lem:regret decomposition}, it remains to bound $\hat{R}(n) = \sum_{t = t_0}^n \Delta_{A_t} \I{\cF_t}$, where the event $\cF_t$ is defined in \eqref{eq:suboptimality event}. We partition $\hat{R}(n)$ as:
\begin{align*}
  \hat{R}(n)
  & = \sum_{t = t_0}^n \Delta_{A_t} \I{\cF_t, \Delta_{A_t} < \epsilon} +
  \sum_{t = t_0}^n \Delta_{A_t} \I{\cF_t, \Delta_{A_t} \geq \epsilon} \\
  & \leq \epsilon n + \sum_{t = t_0}^n \Delta_{A_t} \I{\cF_t, \Delta_{A_t} \geq \epsilon}
\end{align*}
and bound the first term trivially. The second term is bounded in the same way as $\hat{R}(n)$ in the proof of Theorem~\ref{thm:K}, except that we only consider the gaps $\Delta_{e, k} \geq \epsilon$. Therefore, $\Delta_{e, \min} \geq \epsilon$ and we get:
\begin{align*}
  \sum_{t = t_0}^n \Delta_{A_t} \I{\cF_t, \Delta_{A_t} \geq \epsilon} \leq
  \sum_{e \in \tilde{E}} K \frac{534}{\epsilon} \log n \leq
  K L \frac{534}{\epsilon} \log n\,.
\end{align*}
Based on the above inequalities:
\begin{align*}
  R(n) \leq \frac{534 K L}{\epsilon} \log n + \epsilon n + \left(\frac{\pi^2}{3} + 1\right) K L\,.
\end{align*}
Finally, we choose $\displaystyle \epsilon = \sqrt{\frac{534 K L \log n}{n}}$ and get:
\begin{align*}
  R(n) \leq
  2 \sqrt{534 K L n \log n} + \left(\frac{\pi^2}{3} + 1\right) K L <
  47 \sqrt{K L n \log n} + \left(\frac{\pi^2}{3} + 1\right) K L\,,
\end{align*}
which concludes our proof.

\section{Technical Lemmas}
\label{sec:lemmas}

\begin{lemma}
\label{lem:little helper} Let $S_i$, $\bar{S}_i$, and $m_i$ be defined as in Lemma~\ref{lem:complete system}; and $|S_i| < \beta_i K$ for all $i > 0$. Then:
\begin{align*}
  \sum_{i = 1}^\infty \frac{|\bar{S}_i \setminus \bar{S}_{i - 1}|}{\sqrt{m_i}} <
  \sum_{i = 1}^\infty \frac{(\beta_{i - 1} - \beta_i) K}{\sqrt{m_i}}\,.
\end{align*}
\end{lemma}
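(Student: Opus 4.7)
The plan is to argue by summation by parts (Abel summation). Since $(\alpha_i)$ is strictly decreasing by \eqref{eq:condition 2}, so is $(m_i)$; hence $S_i \subseteq S_{i-1}$ and $\bar{S}_{i-1} \subseteq \bar{S}_i$, which gives $|\bar{S}_i \setminus \bar{S}_{i-1}| = |\bar{S}_i| - |\bar{S}_{i-1}| = |S_{i-1}| - |S_i|$. I would write $s_i = |S_i|$ and $b_i = \beta_i K$. The hypothesis of the lemma yields $s_i < b_i$ for every $i \geq 1$, while $s_0 = |\tilde{A}_t| \leq K = b_0$ (using $\beta_0 = 1$ and $|A_t| \leq K$). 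Observe further that $s_i = 0$ for all sufficiently large $i$, since $\alpha_i \to 0$ forces $m_i < 1 \leq T_{t-1}(e)$ for every item $e$ once $i$ is large enough.

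Next, I would Abel-sum both series. For any $N$ large enough that $s_N = 0$, shifting the index in one of the two telescoping pieces yields
\begin{equation*}
\sum_{i=1}^{N} \frac{s_{i-1} - s_i}{\sqrt{m_i}} = \frac{s_0}{\sqrt{m_1}} + \sum_{i=1}^{N-1} s_i \left(\frac{1}{\sqrt{m_{i+1}}} - \frac{1}{\sqrt{m_i}}\right),
\end{equation*}
so sending $N \to \infty$ gives a representation of the LHS of the lemma. The same manipulation applied to the RHS (valid because $b_i \to 0$ and all summands are non-negative) gives
\begin{equation*}
\sum_{i=1}^{\infty} \frac{b_{i-1} - b_i}{\sqrt{m_i}} = \frac{b_0}{\sqrt{m_1}} + \sum_{i=1}^{\infty} b_i \left(\frac{1}{\sqrt{m_{i+1}}} - \frac{1}{\sqrt{m_i}}\right).
\end{equation*}

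To conclude I would compare the two resulting series term by term. Each coefficient $1/\sqrt{m_{i+1}} - 1/\sqrt{m_i}$ is strictly positive by strict monotonicity of $(m_i)$, so $s_0 \leq b_0$ together with the strict bound $s_i < b_i$ for $i \geq 1$ implies that every LHS term is at most the corresponding RHS term, and that the $i=1$ term is strictly smaller. This produces the claimed strict inequality. The only mild subtlety is convergence: if the RHS is infinite the inequality is trivial, and otherwise both Abel rearrangements are justified by non-negativity of their summands. The substantive content of the lemma is already packaged into the hypothesis $|S_i| < \beta_i K$; I expect the only obstacle to be keeping track of the boundary term and the index shift correctly.
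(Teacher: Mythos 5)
Your proposal is correct and is essentially the paper's own argument: the paper proves the lemma by exactly this summation-by-parts step, telescoping $|\bar{S}_i \setminus \bar{S}_{i-1}| = |S_{i-1}| - |S_i|$, comparing $|S_0| \leq \beta_0 K$ and $|S_i| < \beta_i K$ termwise against the strictly positive coefficients $1/\sqrt{m_{i+1}} - 1/\sqrt{m_i}$, and then undoing the Abel summation. Your added remarks on convergence (that $S_i$ is eventually empty since $m_i \to 0$ and $T_{t-1}(e) \geq 1$) and on where strictness comes from are fine and only make explicit what the paper leaves implicit.
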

\begin{proof}
The lemma is proved as:
\begin{align*}
  \sum_{i = 1}^\infty |\bar{S}_i \setminus \bar{S}_{i - 1}| \frac{1}{\sqrt{m_i}}
  & = \sum_{i = 1}^\infty (|S_{i - 1} \setminus S_i|) \frac{1}{\sqrt{m_i}} \\
  & = \sum_{i = 1}^\infty (|S_{i - 1}| - |S_i|) \frac{1}{\sqrt{m_i}} \\
  & = \frac{|S_0|}{\sqrt{m_1}} +
  \sum_{i = 1}^\infty |S_i| \left(\frac{1}{\sqrt{m_{i + 1}}} - \frac{1}{\sqrt{m_i}}\right) \\
  & < \frac{\beta_0 K}{\sqrt{m_1}} +
  \sum_{i = 1}^\infty \beta_i K \left(\frac{1}{\sqrt{m_{i + 1}}} - \frac{1}{\sqrt{m_i}}\right) \\
  & = \sum_{i = 1}^\infty (\beta_{i - 1} - \beta_i) K \frac{1}{\sqrt{m_i}}\,.
\end{align*}
The first two equalities follow from the definitions of $\bar{S}_i$ and $S_i$. The inequality follows from the facts that $|S_i| < \beta_i K$ for all $i > 0$ and $|S_0| \leq \beta_0 K$.
\end{proof}

\end{document}